\newcommand\nnfootnote[2]{%
  \begin{NoHyper}
  \renewcommand\thefootnote{#1}\footnotetext{#2}%
  \end{NoHyper}
}
\newcommand{\eqdef}{\triangleq}
\renewcommand{\ind}[1]{\mathds{1}\{#1\}}
\newcommand{\bbE}{\boldsymbol{\mathbb{E}}}
\renewcommand{\E}{\bbE}
\newcommand{\poly}{\operatorname{poly}}
\newcommand{\indh}[2]{r_{#1,#2}}
\newcommand{\yprob}[3]{W_{#2,#3}(#1)}
\newcommand{\yprobg}[3]{W^\gamma_{#2,#3}(#1)}
\newcommand{\loo}{\textsf{LOO}}
\newcommand{\oracle}{\mathcal{O}}
\newcommand{\calX}{\mathcal{X}}
\newcommand{\calY}{\mathcal{Y}}
\newcommand{\calH}{\mathcal{H}}
\newcommand{\calD}{\mathcal{D}}
\newcommand{\W}{\mathcal{W}}
\newcommand{\Z}{\mathcal{Z}}
\newcommand{\obj}{\Phi}
\newcommand{\objrand}{\phi}
\theoremstyle{plain}
\newtheorem{theorem}{Theorem}
\newtheorem{lemma}{Lemma}
\newtheorem{defn}{Definition}
\newtheorem{proposition}{Proposition}
\title{Fast Rates for Bandit PAC Multiclass Classification}
\author{
Liad Erez$^{*}$
\and
Alon Cohen$^{*,\dag}$
\and
Tomer Koren$^{*,\dag}$
\and
Yishay Mansour$^{*,\dag}$
\and
Shay Moran$^{\dag,\ddag}$
}
\begin{document}

\maketitle

\nnfootnote{*}{ Blavatnik School of Computer Science, Tel Aviv University, Tel Aviv, Israel.}
\nnfootnote{\textdagger}{ Google Research Tel Aviv, Israel.}
\nnfootnote{\textdaggerdbl}{Departments of Mathematics and Computer Science, Technion, Haifa, Israel.}

\begin{abstract}
We study multiclass PAC learning with bandit feedback, where inputs are classified into one of $K$ possible labels and feedback is limited to whether or not the predicted labels are correct.
Our main contribution is in designing a novel learning algorithm for the agnostic $(\eps,\delta)$-PAC version of the problem, with 
sample complexity of $O\big( (\poly(K) + 1 / \eps^2) 
\log (\abs{\calH} / \delta) \big)$ for any finite hypothesis class $\calH$.
In terms of the leading dependence on $\eps$, this improves upon existing bounds for the problem, that are of the form $O(K/\eps^2)$.
We also provide an extension of this result to general classes  and establish similar sample complexity bounds in which $\log |\calH|$ is replaced by the Natarajan dimension.
This matches the optimal rate in the full-information version of the problem and resolves an open question studied by Daniely, Sabato, Ben-David, and Shalev-Shwartz~(2011) who demonstrated that the multiplicative price of bandit feedback in realizable PAC learning is $\Theta(K)$. We complement this by revealing a stark contrast with the agnostic case, where the price of bandit feedback is only $O(1)$ as $\eps \to 0$.
Our algorithm utilizes a stochastic optimization technique to minimize a log-barrier potential based on Frank-Wolfe updates for computing a low-variance exploration distribution over the hypotheses, and is made computationally efficient provided access to an ERM oracle over~$\calH$. 
\end{abstract}

\allowdisplaybreaks

\section{Introduction}

Multiclass classification is a fundamental learning problem in which a learner is tasked with classifying objects into one of $K$ possible labels. In \emph{bandit multiclass classification}~\citep{kakade2008efficient}, upon making a prediction, the learner does not observe the true label, but only whether or not the prediction was correct. 
As a concrete example,
consider a the task of classifying images, say, from the ImageNet dataset, with the number of labels $K$ being several thousands. After the learner predicts a label for a particular image, both the image and the prediction are shown to a human rater who is asked if the prediction is correct or not, after which the answer is revealed to the learner. Thus, the learner faces a bandit multiclass classification instance as the true label is not revealed in case the prediction was deemed incorrect by the rater.

Much of previous work on the foundations of bandit multiclass classification focused on the \emph{online} setting~\citep{kakade2008efficient, daniely2011multiclass, daniely2013price, long2017new, raman2023multiclass, erez2024real}, where the goal of the learner is to minimize the \emph{regret} compared to a given \emph{hypothesis class $\calH$}, namely, the learner's total number of correct predictions throughout the learning process compared to that of the best fixed hypothesis from $\calH$. 
A central line of work focused on studying the properties of $\calH$ that allow for sublinear regret in this context, and on characterizing the achievable regret rates in terms of $K$, $|\calH|$ and the number of prediction rounds $T$.
For instance, \citet{auer2002nonstochastic} show that for any finite $\calH$ one can obtain a regret bound of $O(\smash{\sqrt{KT \log |\calH|}})$, by casting the classification problem as a (contextual) $K$-armed bandit problem.
\citet{daniely2013price} demonstrate that the dependence on $\log |\calH|$ can be replaced by the Littlestone dimension of $\calH$, which may potentially be smaller than $\log |\calH|$ and, in particular, may be finite even when $\calH$ is infinite. 
Very recently, \citet{erez2024real} establish a characterization of the optimal regret rates for finite hypothesis classes and show that it is of the form $\Theta \brk{\min \smash{\brk[c]{\sqrt{KT \log |\calH|}}, |\calH| + \sqrt{T}}}$, which is tight even when the labeled examples are drawn i.i.d.\ from a fixed distribution.

Here we focus on a different, yet closely related version of multiclass classification, viewed as a learning problem in a PAC framework \citep{valiant1984theory}.
In this setting, the labeled examples are drawn from a fixed unknown distribution, and the learner's goal is to ultimately output a prediction rule which performs well, over this distribution, relative to the best hypothesis from the underlying class $\calH$. 
This problem has mainly been studied in the analogous full-information setting, that is, when the learner as access to a training set of i.i.d.\ examples along with their true labels, where the number of samples required to learn an $\eps$-optimal hypothesis was shown to be $O((1/\eps^2) \log \brk{|\calH|/\delta})$ for finite classes~\citep{natarajan1989learning,ben1992characterizations,daniely2011multiclass,brukhim2022characterization}.
In the bandit case, however---namely where the learner may repeatedly predict labels of drawn examples and obtain feedback only on whether the prediction was correct or not---a comprehensive understanding of the achievable sample complexity rates is still missing.

On the surface, the PAC bandit multiclass problem might be deemed trivial: a straightforward approach of uniformly approximating the losses of all hypotheses in $\calH$ by drawing i.i.d.\ examples and predicting labels uniformly at random, already gives rise to $O((K/\eps^2)\log{|\calH|})$ sample complexity which appears to be optimal due to the bandit feedback.  
Furthermore, the simple underlying algorithm can be implemented efficiently, provided that empirical risk minimization (ERM) can be carried out efficiently over the hypothesis class.
However, this view regards multiclass classification as a generic $K$-armed (contextual) bandit problem and neglects a crucial aspect of the setting: each example has a single correct label, as opposed to a more general scenario where each label may be associated with its own loss, independently of other labels.
And indeed, no non-trivial lower bounds can be found in the existing literature for this problem.%
\footnote{We note that \citet{daniely2013price} do provide a stochastic lower bound construction, but it pertains to multi-label multiclass rather to the standard single-label setting we consider here.}

The following questions thus remain:
\begin{enumerate}[label=(\arabic*), leftmargin=!]
    \item \emph{What is the optimal sample complexity of the bandit multiclass setting?  In particular, can one improve upon the prototypical $K/\eps^2$ rate, representative of bandit problems?}

    \item \emph{Can this sample complexity be attained by an efficient (polynomial time) algorithm, whenever ERM can be computed efficiently over the underlying hypothesis class?}
\end{enumerate}

In this work, we address the questions above and establish a nearly-tight characterization of the achievable sample complexity in the bandit multiclass problem, along with an efficient algorithm.

\subsection{Summary of contributions}
Our main contributions are summarized as follows.

\begin{enumerate}[label=(\roman*),leftmargin=!]
    \item 
    For a finite hypothesis class $\calH$, we give an algorithm with sample complexity of $ O((\poly(K) + 1/\eps^2)\log(|H|/\delta))$ for producing an $\eps$-optimal classifier with probability at least $1-\delta$; see \cref{thm:main} in \cref{sec:alg}.  
    Further, our algorithm is a proper learner and can be implemented efficiently provided a (weighted) ERM oracle for the class~$\calH$.
    In terms of the leading dependence on $\eps$, this bound significantly improves over the previously mentioned $O(K/\eps^2)$ bound, and matches the optimal rate in the full-information version of the problem.
    
    \item 
    For more general, possibly infinite hypothesis classes $\calH$ with finite Natarajan dimension $d$, we establish a generalized sample complexity bound of $O((\poly(K) + 1/\eps^2)d\log(1/\delta))$; see \cref{thm:natarajan-main} in \cref{sec:infinite}.  
    This improves over the previous $\smash{\wt{O}}(\ifrac{K d}{\eps^2})$ bound due to \citet*{daniely2011multiclass}, and matches the known rate in the full-information case~\citep{natarajan1989learning} up to logarithmic factors for sufficiently small~$\eps$.
\end{enumerate}

These results bear some interesting consequences.
First, and perhaps most surprisingly, they imply that there is \emph{no additional price for bandit feedback} in PAC multiclass classification, as $\eps \to 0$; namely, that the ratio between the optimal sample complexity rates in the bandit and the full-information settings is $\Theta(1)$, and not $\Theta(K)$ as one might expect.
Indeed, the latter is often the multiplicative price of bandit information in a multitude of scenarios~\citep[see, e.g.,][]{lattimore2020bandit}, and further, it is the tight price in the realizable case of bandit multiclass~\citep{daniely2011multiclass}.
This phenomenon occurs already in the case of a finite hypothesis class, and extends naturally to general Natarajan classes.

Second, the results reveal an unexpected gap between the attainable bounds in the online (i.e., regret minimization) and the PAC settings of the bandit multiclass problem.  A recognized trademark of online learning is that online-to-batch conversions of regret bounds very often give sharp sample complexity rates in the i.i.d.\ PAC setting~\citep[see e.g.,][]{cesa2006prediction,bubeck2012regret,hazan2016introduction}.  In bandit multiclass classification, however, we exhibit a stark separation between the two settings, where for sufficiently large hypothesis classes an online-to-batch conversion of the optimal regret rate results with sample complexity $\smash{\widetilde\Theta}(K/\eps^2)$~\citep{erez2024real}, 
whereas the rate we establish here is $\smash{\widetilde O}(1/\eps^2 + \poly(K))$.%
\footnote{A similar separation has been observed before, e.g., in the context of general online/stochastic convex optimization~\citep{shamir2013complexity}.}

The core novelty in our algorithmic approach is a stochastic optimization technique for efficiently recovering a low-variance exploration distribution over the hypotheses in $\calH$, with variance $O(1)$ rather than the $O(K)$ obtained by simple uniform exploration of labels. 
We show that through minimization of a stochastic objective akin to a log-barrier (convex) potential over the induced label probabilities, such a distribution can be computed in a sample-efficient way and in turn be used to uniformly estimate the losses of all hypotheses in $\calH$ via importance sampling.
Moreover, we demonstrate how this stochastic optimization problem can be solved efficiently using a stochastic Frank-Wolfe method. More details about the algorithmic ideas and an overview of the analysis are given in \cref{sec:alg}.

\subsection{Additional related work}

\paragraph{Bandit multiclass classification.}
In agnostic online multiclass classification with bandit feedback, regret bounds of the form $O (\sqrt{KT\log|\mathcal{H}|})$ can be obtained by viewing the problem as an instance of contextual multi-armed bandits \citep{auer2002nonstochastic}; this bound has been recently improved by \cite{erez2024real} to $\Theta \brk{\min \smash{\brk[c]{\sqrt{KT \log |\calH|}, |\calH| + \sqrt{T}}}}$ for the classification setting. \cite{daniely2013price} show how to replace the $\log|\mathcal{H}|$ dependence in the first bound by the Littlestone dimension of $\mathcal{H}$, and \citep{raman2023multiclass} show how $K$ can be replaced with a refined quantity which encapsulates the effective number of labels. Additional refinements in the realizable setting include the \emph{Bandit Littlestone dimension} which provides a characterization of the optimal mistake bound for deterministic learning algorithms~\citep{daniely2011multiclass}.

\paragraph{Contextual bandits.} The PAC objective of the bandit multiclass classification problem can be seen as a special case of the problem of identifying an approximately optimal policy in contextual multi-armed bandits. In the more general contextual bandit framework, sample complexity lower bounds of $\wt \Omega \brk{K / \eps^2}$ are known \citep{auer2002nonstochastic}, with regret upper bounds of the form $\widetilde O(\sqrt{KT})$ obtained in several previous works \citep{auer2002nonstochastic,dudik2011efficient,agarwal2014taming}. Reducing our problem to contextual bandits, however, ignores the special structure exhibited by the reward function in the classification setting, namely their sparsity (see \cite{erez2024real} for a more detailed comparison), and indeed we establish improved sample complexity rates in this special case. In a bit more detail, in the works of \cite{dudik2011efficient,agarwal2014taming}, one of the main technical ideas is to compute a distribution over policies which induces a reward estimator whose variance is bounded, uniformly over the policies, by $O(K)$ (it is actually nontrivial to show that such a distribution even exists). Our approach involves similar ideas in the sense that we also aim to compute such a low-variance exploration distribution over the hypotheses, but the crucial difference is that in the classification setting, the sparse nature of the rewards allows us to uniformly bound the variance by $O(1)$ rather than $O(K)$.

\section{Problem Setup}

\paragraph{Bandit multiclass classification.} We consider a learning setting in which a learner is tasked with classifying objects from a set of examples $\calX$ with a single label from a set of $K$ possible labels $\calY = \brk[c]*{1,\ldots,K}$. A stochastic multiclass classification instance is specified by a hypothesis class $\calH \subseteq \brk[c]*{\calX \to \calY}$ and a joint distribution $\calD$ over example-label pairs over $\calX \times \calY$. We focus on finite hypothesis classes and denote $N \eqdef \abs{\calH}$. In the bandit setup, the learner interacts with the environment in an iterative manner according to the following protocol, over $i=1,2,\ldots$:
\begin{enumerate}[leftmargin=!]
    \item[(i)] The environment generates a pair $(x_i, y_i) \sim \calD$ and the example $x_i$ is revealed to the learner;
    \item[(ii)] The learner predicts a label $\wh{y}_i \in \calY$;
    \item[(iii)] The learner observes whether or not the classification of $x_i$ is correct, namely $\ind{\wh{y}_i = y_i}$.\footnote{Note that in the bandit setting, the learner does \emph{not} observe the true label $y_i$ directly.}
\end{enumerate}

\paragraph{Agnostic PAC model.} In the PAC version of the problem, given parameters $\eps, \delta > 0$ the goal of the learner is to produce a hypothesis $\hat{h}  : \calX \to \calY$,\footnote{In this model we allow for \emph{improper} learners, that is, the output hypothesis may not be a member of $\calH$; we emphasize, however, that our main algorithm below is a \emph{proper} learner, namely it returns a hypothesis $\hat h \in \calH$.} such that with probability at least $1 - \delta$ (over the randomness of the environment as well as any internal randomization of the algorithm):
\begin{align*}
    L_\calD(\hat{h}) - L_\calD(h^\star) \leq \eps,
\end{align*}
where here $L_\calD(h) \eqdef \Pr[h(x) \neq y]$ is the expected zero-one loss of $h$ with respect to $(x,y) \sim \calD$, and $h^\star \eqdef \argmin_{h \in \calH} L_\calD(h)$ is the best hypothesis in $\calH$. That is, the learner needs to identify a hypothesis which is $\eps$-optimal in $\calH$ with respect to the expected zero-one loss, with probability at least $1 - \delta$. In this model, the learner's performance is measured in terms of \emph{sample complexity}, which is the number of interaction rounds with the environment as a function of $\eps,\delta$ required for satisfying the guarantee stated above.

\paragraph{Weighted ERM oracle.}

For our computational results, we will assume a weighted empirical risk minimization (ERM) oracle access to the hypothesis class $\calH$.
In more detail, we assume access to an oracle, denoted $\oracle_\calH$, defined as follows: given a sequence of examples, labels and weights $(x_1,y_1,\alpha_1),\ldots,(x_t,y_t,\alpha_t) \in \calX \times \calY \times \R$ as input, the oracle $\oracle_\calH$ returns
\begin{align}
\label{eq:argmax-oracle}
    \argmin_{h \in \calH} \sum_{s=1}^t \alpha_s \ind{h(x_s) \neq y_s}.
\end{align}
We remark that this is a version of the argmin oracle often considered in the more general contextual bandit setting \citep[e.g.,][]{dudik2011efficient, agarwal2014taming}, specialized for the classification setting we focus on here. 
For our runtime results, we will assume that each call to $\oracle_\calH$ takes $O(1)$ time.

\paragraph{Additional notation.}
We denote by $\Delta_N \eqdef \brk[c]!{P \in \R^N_+ \mid \sum_{i=1}^N P_i = 1}$ the $N$-dimensional \emph{simplex} which corresponds to the set of all probability distributions over $\calH$. Given $P \in \Delta_N$ and $h \in \calH$ we use the notation $P(h)$ to denote the probability assigned to $h$ by the probability vector $P$. Given an example-label pair $(x,y) \in \calX \times \calY$ we define the binary vector $\indh{x}{y} \in \brk[c]*{0,1}^N$ by
\begin{align*}
    \indh{x}{y}(h) \eqdef \ind{h(x) = y} \quad \forall h \in \calH,
\end{align*}
that is, $\indh{x}{y}(h)$ is the zero-one reward of the hypothesis $h$ on the pair $(x,y)$. Given $P \in \Delta_N$ and $(x,y) \in \calX \times \calY$ we denote the probability of predicting the label $y$ for $x$ when sampling a hypothesis from $P$ by
\begin{align*}
    \yprob{P}{x}{y} \eqdef \sum_{h = \calH} P(h) \ind{h(x) = y} = P \cdot \indh{x}{y},
\end{align*}
and for $\gamma \in (0,1)$ we let $\yprobg{P}{x}{y} \eqdef (1- \gamma) \yprob{P}{x}{y} + \gamma/K$, which corresponds to mixing the distribution $\yprob{P}{x}{y}$ with a uniform distribution over labels with weight factor $\gamma$.

\section{Algorithm and Analysis}
\label{sec:alg}

In this section we present and analyze our main contribution: an efficient bandit multiclass classification algorithm, detailed in  \cref{alg:bandit-pac}, which will be shown to satisfy the following agnostic PAC guarantee. 
\begin{theorem}
    \label{thm:main}
        If we set $\gamma = \frac12$, $M_1 = \Theta(K^8 \log \brk{N / \delta})$, $M_2 = \Theta \brk*{\log \brk{N / \delta} \brk*{K / \eps + 1 / \eps^2}}$ and use \cref{alg:fw-main} to solve the optimization problem defined in \cref{eq:phase-1-objective} for $T = \Theta \brk{(K^4 / \gamma^4) \sqrt{\log (N / \delta)}}$ rounds with step sizes $\eta_t = 1/t$ and batch sizes $b_t = (\gamma/2K)^2 t^2$ for $t \in \set{2^k-1}_{k \geq 1}$ and $b_t = t$ otherwise; then with probability at least $1-\delta$ \cref{alg:bandit-pac} outputs $\hat{h} \in \calH$ with
        $
            L_\calD(\hat{h}) - L_\calD(h^\star) \leq \eps,
        $
        using a total sample complexity of 
        $$
            O \brk*{\brk*{K^9 + \frac{1}{\eps^2}} \log\frac{N}{\delta} }.
        $$
        Furthermore, \cref{alg:bandit-pac} makes a total number of $O \brk!{K^4 \sqrt{\log(N / \delta)}}$ calls to the weighted ERM oracle $\oracle_\calH$, and runs in total time polynomial in $K$, $1 / \eps$ and $\log(N / \delta)$.
\end{theorem}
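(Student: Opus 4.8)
The plan is to analyze \cref{alg:bandit-pac}, which I would structure as a two-phase procedure. The first, \emph{exploration}, phase spends $M_1 = \Theta(K^8\log(N/\delta))$ rounds running the stochastic Frank--Wolfe method of \cref{alg:fw-main} to approximately solve the convex program \cref{eq:phase-1-objective}, which I take to be the stochastic log-barrier minimization $\min_{P\in\Delta_N}\Phi(P)$ with $\Phi(P)\eqdef -\E_{(x,y)\sim\calD}\bigl[\log\yprobg{P}{x}{y}\bigr]$; its output is a distribution $P$, stored implicitly as a convex combination of at most $T$ vertices $e_{h_t}$ of $\Delta_N$. The second, \emph{estimation}, phase spends $M_2 = \Theta((1/\eps^2+K/\eps)\log(N/\delta))$ fresh rounds: on round $i$ it draws $(x_i,y_i)\sim\calD$, predicts $\hat y_i\sim\yprobg{P}{x_i}{\cdot}$, observes $\ind{\hat y_i=y_i}$, and forms the importance-weighted reward estimates $\hat r_h^{(i)}\eqdef\ind{\hat y_i=h(x_i)}\,\ind{\hat y_i=y_i}/\yprobg{P}{x_i}{h(x_i)}$; the algorithm returns $\hat h\eqdef\argmax_{h\in\calH}\tfrac1{M_2}\sum_i\hat r_h^{(i)}$.

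The heart of the argument is that a near-minimizer of $\Phi$ induces, \emph{uniformly over all $h\in\calH$}, a reward estimator of variance $O(1)$ -- this is what buys the $1/\eps^2$ rate in place of $K/\eps^2$. A direct calculation gives $-\nabla\Phi(P)_h=(1-\gamma)\,\sigma_h^2(P)$, where $\sigma_h^2(P)\eqdef\E_{(x,y)\sim\calD}\bigl[\ind{h(x)=y}/\yprobg{P}{x}{h(x)}\bigr]$ is precisely the second moment of $\hat r_h^{(i)}$. The single-correct-label structure -- $\sum_{y\in\calY}\Pr[y\mid x]=1$, combined with $\yprob{P}{x}{y}\le\yprobg{P}{x}{y}/(1-\gamma)$ -- then yields, for \emph{every} $P\in\Delta_N$, the bound $\langle-\nabla\Phi(P),P\rangle=(1-\gamma)\sum_h P(h)\,\sigma_h^2(P)\le 1$. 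Hence at any $P$ whose Frank--Wolfe gap $\max_{h}\langle-\nabla\Phi(P),e_h-P\rangle$ is at most $\rho$ one gets $(1-\gamma)\sigma_h^2(P)=\langle-\nabla\Phi(P),e_h\rangle\le\langle-\nabla\Phi(P),P\rangle+\rho\le 1+\rho$ for all $h$; with $\gamma=\tfrac12$ and $\rho=O(1)$ this is the desired uniform $\sigma_h^2(P)=O(1)$. In particular the exact minimizer already achieves it, so it suffices that \cref{alg:fw-main} reach \emph{any} iterate of Frank--Wolfe gap $O(1)$.

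The technically hardest step is proving that it does so within the stated budget. Three ingredients go in: (i) $\Phi$ is convex with Frank--Wolfe curvature $O(K^2)$ over $\Delta_N$ -- crucially independent of $N$ -- because moving between vertices perturbs each $\yprobg{P}{x}{y}$ by at most $1-\gamma$ while $\yprobg{P}{x}{y}\ge\gamma/K$; (ii) the linear-minimization step is $\argmax_h\sigma_h^2(P)$, which cannot be evaluated exactly, so on iteration $t$ we estimate every $\sigma_h^2(P)$ on a batch of $b_t$ fresh rounds via the importance-weighted estimator $\ind{\hat y=h(x)}\ind{\hat y=y}/\yprobg{P}{x}{h(x)}^2$, whose range is $(K/\gamma)^2$ and whose second moment along the trajectory is $O((K/\gamma)^3)$; and (iii) the schedule $\eta_t=1/t$ with $b_t=t$, boosted to $b_t=(\gamma/2K)^2t^2$ at the times $t=2^k-1$, over $T=\Theta((K^4/\gamma^4)\sqrt{\log(N/\delta)})$ iterations, is tuned so that the per-step linear-optimization errors, weighted by $\eta_t$ and accumulated, stay $O(1)$ with probability at least $1-\delta/2$ uniformly over the $N$ hypotheses and $T$ iterations -- shown by a Freedman-type martingale inequality and a union bound. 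Together with the $O(K^2/T)$ deterministic Frank--Wolfe rate this delivers an iterate of gap $O(1)$ at a total cost of $\sum_t b_t=\Theta(K^8\log(N/\delta))$ samples. I expect this coupling of the batch schedule to the step sizes -- ensuring the high-variance importance-sampling noise never accumulates beyond a constant -- to be the main obstacle.

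For the estimation phase, condition on the good event that $P$ satisfies $\sigma_h^2(P)=O(1)$ for all $h$. The $M_2$ variables $\hat r_h^{(i)}$ are then i.i.d.\ with mean $1-L_\calD(h)$, variance at most $\sigma_h^2(P)=O(1)$, and range $[0,K/\gamma]$, so Bernstein's inequality and a union bound over $\calH$ give, with probability $\ge 1-\delta/2$, that $\bigl|\tfrac1{M_2}\sum_i\hat r_h^{(i)}-(1-L_\calD(h))\bigr|\le\eps/2$ simultaneously for all $h$, as soon as $M_2=\Omega((1/\eps^2+K/\eps)\log(N/\delta))$; a standard argmax argument then gives $L_\calD(\hat h)-L_\calD(h^\star)\le\eps$, and union-bounding over the two phases yields the $1-\delta$ guarantee, with total sample complexity $M_1+M_2=O((K^9+1/\eps^2)\log(N/\delta))$ after absorbing the lower-order $K^8$ and $K/\eps$ terms. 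For the computational claims: with $P$ stored implicitly, each value $\yprobg{P}{x}{y}$ costs $O(T)$ time; each Frank--Wolfe linear-minimization step, and the final selection $\argmax_h$, is the maximization over $\calH$ of a nonnegative-weighted sum of indicators $\ind{h(x_s)=\hat y_s}$, which after negating the weights is a single call to the weighted ERM oracle $\oracle_\calH$ on the corresponding weighted, labeled batch -- so $T+O(1)=O(K^4\sqrt{\log(N/\delta)})$ oracle calls in all -- and all remaining bookkeeping is polynomial in $K$, $1/\eps$, and $\log(N/\delta)$.
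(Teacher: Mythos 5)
Your certificate for the low-variance property is a genuinely different, and correct, alternative to the paper's: rather than converting function-value suboptimality into a gradient bound via self-concordance (\cref{lem:expected-log-subopt,lem:log-bar-inequality}), you note that $-\nabla\obj(P)\cdot P=(1-\gamma)\E\brk[s]{W_P/W^\gamma_P}\le 1$ holds at \emph{every} $P$, so a Frank--Wolfe gap of $O(1)$ already yields the uniform second-moment bound (the paper's \cref{lemma:first-order-guarantee} is the special case of gap zero at $P_\star$). However, the bridge from the algorithm to this certificate is missing: \cref{alg:bandit-pac} and the stochastic FW guarantee (\cref{lem:phase-1-fw,thm:fw-generic}) control the \emph{function value} of the returned iterate, namely $\obj(\hat P)-\obj(P_\star)\le\gamma^2/2K^2$, and small suboptimality does not imply a small FW gap at that iterate. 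You would need either a gap-convergence statement for the noisy FW trajectory plus a way to identify (and certify, with additional samples) an iterate whose true gap is $O(1)$, or the paper's self-concordance step, which is precisely what turns the value guarantee into $\norm{\nabla\obj(\hat P)}_\infty\le 3$. As written, ``the $O(K^2/T)$ deterministic FW rate delivers an iterate of gap $O(1)$'' is not established in the stochastic setting and does not apply to the last iterate that the algorithm actually uses.

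The more serious divergence is in phase 1, where you analyze a different procedure from \cref{alg:bandit-pac}. In the paper, phase 1 predicts \emph{uniformly at random} and keeps only the rounds where the guess was correct, so the true label is revealed; \cref{alg:fw-main} is then run offline on the fully labelled i.i.d.\ dataset $S$ with exact per-sample gradients $\nabla\objrand(P;x,y)$. This filtering costs a factor of $K$ ($\approx 4KM_1$ interaction rounds to collect $M_1$ labelled pairs) and is the sole source of the $K^9$ term; your accounting never produces it (you call $K^8$ ``lower-order'' with nothing larger in sight). Your in-environment variant, which samples $\hat y$ from $W^\gamma_{P_t}(x,\cdot)$ and uses the estimator $\ind{\hat y=h(x)}\ind{\hat y=y}/\brk{W^\gamma_{P_t}(x,h(x))}^2$, also breaks the SPIDER step you invoke: the variance reduction in \cref{lem:spider} hinges on evaluating $\nabla\objrand(P_t;x,y)-\nabla\objrand(P_{t-1};x,y)$ on the \emph{same} labelled sample and bounding each coordinate by $\beta D\eta_{t-1}$ with $\beta=K^2/\gamma^2$, which is what makes $b_t=t$ suffice on non-reset rounds. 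With $\hat y$ drawn from the current policy, your estimator is not even unbiased for the gradient at $P_{t-1}$, and any corrected version has increments of order $(K^3/\gamma^3)\norm{P_t-P_{t-1}}_1$ and range $(K/\gamma)^2$, so the batch sizes and $T$ stated in the theorem are not justified by your sketch; if instead you intended fresh (non-SPIDER) estimation each round, $b_t=t$ samples of a range-$(K/\gamma)^2$ estimator are far too few. Your phase 2 (Bernstein with variance $O(1)$ and range $K/\gamma$, union bound, argmax via the weighted ERM oracle) and the reduction of the $\loo$ steps to weighted-ERM calls do match the paper and are fine.
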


    \begin{algorithm}[t]
        \caption{Bandit PAC Multiclass Classification via Log Barrier Stochastic Optimization}
        \label{alg:bandit-pac}
        \begin{algorithmic}
            \STATE{Parameters: $M_1, M_2, \gamma \in (0,\frac12]$.}
            \STATE{Phase 1:}
            \STATE{Initialize $S \leftarrow \emptyset$.}
            \WHILE{$\abs{S} < M_1$}
                \STATE{\textcolor{gray}{Environment generates $(x,y) \sim \calD$, algorithm receives $x$.}}
                \STATE{Predict $\hat y$ uniformly at random from $\calY$ and receive feedback $\ind{\hat y = y}$}.
                \STATE{Update $S \leftarrow S \cup \brk[c]*{(x, y)}$ if $y = \hat y$, otherwise $S$ is unchanged.}
            \ENDWHILE
            \STATE{Solve the stochastic optimization problem defined in \cref{eq:phase-1-objective} up to an additive error of $\mu = \gamma^2 / 2K^2$ using the dataset $S$. Let $\hat{P} \in \Delta_N$ be its output.
            }
            \STATE{Phase 2:}
            \FOR{$i = 1,\ldots,M_2$} 
                \STATE{\textcolor{gray}{Environment generates $(x_i,y_i) \sim \calD$, algorithm receives $x_i$.}}
                \STATE{With prob.~$\gamma$, pick $\hat y_i \in \calY$ uniformly at random; otherwise sample $h_i \sim \hat P$ and set $\hat y_i = h_i(x_i)$.}
                \STATE{Predict $\hat y_i$ and receive feedback $\ind{\hat y_i = y_i}$.}
            \ENDFOR
            \STATE{Return:
            \[
                \hat h = 
                \oracle_\calH \brk*{\brk[c]*{\brk*{x_i, \hat y_i, \alpha_i}}_{i=1}^{M_2}}, 
            \]
            where $\alpha_i = \ind{y_i = \hat y_i} / W^\gamma_{\hat P}(x_i,\hat y_i).$}
        \end{algorithmic}
    \end{algorithm}

\cref{alg:bandit-pac} operates in two phases. In the first phase, we construct a dataset $S$ of $M_1 = \operatorname{poly}(K) \log \brk*{N / \delta}$ i.i.d.\ samples from $\calD$ by predicting labels uniformly at random and taking into $S$ the samples for which the correct label is predicted (and is thus known). We then feed these samples to a stochastic optimization scheme which finds an approximate solution to the following stochastic optimization problem:
\begin{align}
\label{eq:phase-1-objective}
    \min \quad &\obj(P) \eqdef \E_{(x,y) \sim \calD} \brk[s]*{\objrand(P;x,y)}, \quad \text{where} \quad \objrand(P;x,y) \eqdef - \log \brk*{\yprobg{P}{x}{y}}\\
    \text{s.t. } \quad &P \in \Delta_N. \nonumber 
\end{align}
The approximate solution, $\hat P \in \Delta_N$ will be shown to satisfy certain low-variance properties, which will allow us to repeatedly sample from $\hat P$ for $M_2 = O \brk*{\brk*{K / \eps + 1 / \eps^2} \log \brk*{N / \delta}}$ rounds and estimate the loss of hypotheses in $\calH$ such that we are guaranteed to find an approximately optimal policy with high probability. 
We again emphasize that our algorithm is a proper learner in the sense that the returned hypothesis $\hat h$ is a member of the underlying class $\calH$. 

\subsection{Overview of algorithmic ideas}

We next outline the details and intuition leading to \cref{alg:bandit-pac}.

\paragraph{Low-variance exploration distribution.} 
The main goal of the first phase of \cref{alg:bandit-pac} is to compute an exploration distribution $\hat{P} \in \Delta_N$ with the property that with probability at least $1 - \delta / 2$, the following holds:
\begin{align}
\label{eq:bounded-variance}
    \forall h \in \calH :
    \qquad
    \E_{(x,y) \sim \calD} \brk[s]*{\frac{\indh{x}{y}(h)}{\yprobg{\hat P}{x}{y}}} \leq C,
\end{align}
where $\gamma$ is some predefined parameter and $C$ is an absolute constant, which crucially does not depend on $K$. The intuition behind this property is that the quantity of the left-hand side of \cref{eq:bounded-variance} constitutes an upper bound on the variance of the random variable 
$$
    \frac{J_{x,y}(h)\ind{y=\hat y}}{\yprobg{\hat P}{x}{y}},
$$
which is an unbiased estimator of the expected reward of the hypothesis $h$, that is, of $\ind{h(x)=y}$ if the label $\hat y$ is chosen according to an hypothesis drawn from~$\hat P$ and $(x,y)$ is drawn from $\calD$. Thus, we think of \cref{eq:bounded-variance} as a property which guarantees that the exploration distribution $\hat P$ can be used to estimate rewards, \emph{uniformly} for all hypotheses in $\calH$, with low (constant) variance. This is reminiscent of a technique of \citet{dudik2011efficient,agarwal2014taming}, who consider the more general stochastic contextual bandit problem, and rely on computing a distribution over policies which induces a reward estimator whose variance is bounded by $O(K)$. The low-variance estimator allows us to make use of variance-sensitive concentration bounds, namely Bernstein's inequality, in order to accurately approximate the optimal policy using a small number of samples (this is done in the second phase of the algorithm, described below).

\paragraph{Controlling variance via stochastic optimization.}

In some more detail, our approach for computing such a low-variance exploration distribution $\hat P$ is via approximately solving the stochastic convex optimization problem defined in \cref{eq:phase-1-objective}.
The reason for the choice of $\obj$ as a convex potential to be minimized is the fact that its gradient is, up to a constant factor, given by
\begin{align} \label{eq:phase-1-objective-grad}
    \forall h \in \calH :
    \qquad
    \br{ \nabla \obj(P) }_h 
    \approx 
    \E_{(x,y) \sim \calD} \brk[s]*{\frac{\indh{x}{y}(h)}{\yprobg{P}{x}{y}}}
    ,
\end{align}
so that finding a low-variance exploration distribution amounts to computing $\hat P \in \Delta_N$ for which $\nabla \obj(\hat P)$ is bounded in $L_\infty$ norm. Indeed, we show that the optimal solution to the optimization problem given in \cref{eq:phase-1-objective} satisfies such a property. This, together with the properties of $\obj$ as a self-concordant function allows finding such $\hat P$ by approximately minimizing $\obj$ over the simplex: the self-concordance of~$\obj$ acts as a ``restricted strong convexity'' property, which roughly implies that approximate minimizers of $\obj$ must also have small (low norm) gradients. 

\paragraph{Efficient optimization via Stochastic Frank-Wolfe.} 

In order to compute an approximate minimizer of the convex potential $\obj$ defined in \cref{eq:phase-1-objective}, we employ a stochastic optimization procedure, formally described in \cref{alg:fw-main}, which is based on a stochastic version of the Frank-Wolfe (FW) algorithm \citep{frank1956algorithm} with SPIDER gradient estimates \citep{fang2018spider}. 
The reason for choosing a FW based approach, is that it allows for efficient optimization of $\Phi$ in $N$-dimensional space, with runtime essentially independent of $N$, by exploiting the weighted ERM oracle at our disposal.
Furthermore, the FW algorithm, when performed over the simplex, generates iterates $P_1,P_2,\ldots$ such that $P_t$ is supported on at most $t$ coordinates (provided that $P_1$ is initialized at an arbitrary vertex of the simplex), allowing us to maintain a succinct representation of the FW iterates---again, essentially independently of the ambient dimension $N$. Additionally, we remark that while existing analyses (e.g. \cite{yurtsever2019conditional}) of the stochastic FW algorithm rely on smoothness of the objective with respect to the $L_2$-norm, our objective of interest, namely $\obj$, is not smooth in this classical sense, however it is smooth with respect to the $L_1$ norm. Therefore, we crucially rely on a different analysis of the FW algorithm with SPIDER gradient estimates, presented in \cref{sec:frank-wolfe}, which accommodates smoothness with respect to the $L_1$ norm.

\paragraph{Final exploration phase.} 

The second phase of \cref{alg:bandit-pac} is more straightforward, where we repeatedly predict labels using i.i.d.\ samples from a distribution which mixes the distribution over labels induced by $\hat{P}$ (the exploration distribution computed in phase 1) with a uniform distribution over~$\calY$. 
Using Bernstein's inequality and the uniform low-variance property of $\hat P$, we show that $M_2 = O \brk*{\brk*{K / \eps + 1 / \eps^2} \log \brk*{N / \delta}}$ samples suffice in order to ultimately output an hypothesis $\hat{h} \in \calH$ being $\eps$-optimal with probability at least $1 - \delta / 2$. With this in hand, the desired PAC guarantee follows by a union bound over the failure probabilities of the two phases of the algorithm.

\subsection{Stochastic Optimization via SPIDER Frank-Wolfe}

Next, we present the stochastic FW procedure used in our algorithm as subprocedure; see \cref{alg:fw-main}.  It is essentially the SPIDER FW algorithm of \citet{yurtsever2019conditional}, specialized for solving the stochastic optimization problem defined in \cref{eq:phase-1-objective}. We remark that the algorithm makes calls to a linear optimization oracle over the simplex denoted by $\loo$, that given an input $g \in \R^N$ computes $\loo(g) = \argmin_{Q \in \Delta_N} Q \cdot g$. As we will show later, each of the calls \cref{alg:fw-main} makes to $\loo$ can be implemented by a call to the weighted ERM oracle. We also remark that, as discussed before, existing analyses of the stochastic FW procedure with SPIDER gradient estimates relies on $L_2$ smoothness of the objective~\citep{yurtsever2019conditional}, and here we provide an analysis with respect to $L_1$ smoothness being crucial in our case.
We defer further details about SPIDER FW and its analysis to \cref{sec:frank-wolfe}.

\begin{algorithm}[ht]
    \caption{Stochastic Frank-Wolfe with SPIDER gradient estimates}
    \label{alg:fw-main}
    \begin{algorithmic}
        \STATE{Parameters: Dataset $S \subseteq \calX \times \calY$, step sizes $\brk[c]*{\eta_t}_{t}$, batch sizes $\set{b_t}_t$.}
        \STATE{Initialize $P_1 \in \Delta_N$ and initial gradient estimate $g_1 = 0$.}
        \STATE{Let $\tau_k=2^k-1$ for $k=1,2,\ldots$.}
        \FOR{$t = 1,2,\ldots$}
            \STATE{Draw $b_t$ fresh samples $(x_1,y_1),\ldots,(x_{b_t},y_{b_t})$ from $S$;}
            \IF{$t \in \brk[c]*{\tau_k}_{k \geq 1}$}
            \STATE{Set}
            \[
                g_t = \frac{1}{b_t} \sum_{i=1}^{b_t} \nabla \objrand(P_t;x_i,y_i) ;
            \]
            \ELSE
            \STATE{Set}
            \[
                g_t = g_{t-1} + \frac{1}{b_t} \sum_{i=1}^{b_t} \br{ \nabla \objrand(P_t;x_i,y_i) - \nabla \objrand(P_{t-1};x_i,y_i) };
            \]
            \ENDIF
            \STATE{Compute $Q_t = \loo(g_t)$;}
            \STATE{Update $P_{t+1} = (1-\eta_t)P_t + \eta_t Q_t$;}
        \ENDFOR
    \end{algorithmic}
\end{algorithm}

    \subsection[Proof of Main Theorem]{Proof of \cref{thm:main}}
    \label{sec:main-thm-proof}

    We now turn to formally prove \cref{thm:main}.
    For clarity of notation, we henceforth use $\E[\cdot]$ instead of $\E_{(x,y) \sim \calD}[\cdot]$ to denote an expectation of a random variable with respect to $\calD$. We begin this section by analyzing the first phase of \cref{alg:bandit-pac}. The following lemma, which is proven in \cref{sec:app-phase-1}, characterizes the optimal solution to the stochastic optimization problem defined in \cref{eq:phase-1-objective}, and shows that the gradient of $\Phi$ at that optimum is bounded by a constant in $L_\infty$ norm.

    \begin{lemma}
        \label{lemma:first-order-guarantee}
        Suppose $\gamma \le \tfrac12$ and let $P_\star \in \argmin_{P \in \Delta_N} \obj(P)$,
        where $\obj : \Delta_N \to \R_+$ was defined in \cref{eq:phase-1-objective}.
        Then for any $P \in \Delta_N$ we have
        \begin{equation}
            \label{eq:first-order-opt}
            \mathbb{E}\brk[s]3{\frac{\yprobg{P}{x}{y}}{\yprobg{P_\star}{x}{y}}} \le 1.                
        \end{equation}
        In particular, letting $P$ be the delta distribution on some $h \in \calH$:
        \[
            \mathbb{E}\brk[s]3{\frac{\indh{x}{y}(h)}{\yprobg{P_\star}{x}{y}}} \le 2.                
        \]
    \end{lemma}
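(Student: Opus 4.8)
The plan is to invoke the first-order optimality condition for the convex program in \cref{eq:phase-1-objective}, combined with a short algebraic manipulation that uses the particular form of the mixed weights $\yprobg{P}{x}{y} = (1-\gamma)\yprob{P}{x}{y} + \gamma/K$.

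First I would record the basic structural facts about $\obj$. For each fixed $(x,y)$ the map $P \mapsto \yprobg{P}{x}{y}$ is affine and bounded below by $\gamma/K > 0$ on $\Delta_N$, so $\objrand(P;x,y) = -\log \yprobg{P}{x}{y}$ is convex and continuously differentiable on a neighborhood of $\Delta_N$, with
\[
    \br{\nabla \objrand(P;x,y)}_h = - \frac{(1-\gamma)\,\indh{x}{y}(h)}{\yprobg{P}{x}{y}}.
\]
Since $\bigl|\br{\nabla \objrand(P;x,y)}_h\bigr| \le (1-\gamma)K/\gamma$ uniformly in $(x,y)$ and $P$, dominated convergence lets us differentiate under the expectation, so $\obj$ is convex on $\Delta_N$ with $\br{\nabla\obj(P)}_h = -\,\E\!\brk[s]{(1-\gamma)\indh{x}{y}(h)/\yprobg{P}{x}{y}}$; in particular a minimizer $P_\star$ exists since $\obj$ is continuous on the compact set $\Delta_N$.

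Next I would apply the variational inequality for the constrained minimizer: $\langle \nabla\obj(P_\star),\, P - P_\star\rangle \ge 0$ for every $P \in \Delta_N$. Summing the gradient against a probability vector gives $\langle\nabla\obj(P_\star),P\rangle = -\,\E\brk[s]{(1-\gamma)\yprob{P}{x}{y}/\yprobg{P_\star}{x}{y}}$, and here I would use the identity $(1-\gamma)\yprob{P}{x}{y} = \yprobg{P}{x}{y} - \gamma/K$ to rewrite this as $-\,\E\brk[s]{\yprobg{P}{x}{y}/\yprobg{P_\star}{x}{y}} + \tfrac{\gamma}{K}\E\brk[s]{1/\yprobg{P_\star}{x}{y}}$. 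For the choice $P = P_\star$ the first term equals $-1$. Subtracting, the correction terms $\tfrac{\gamma}{K}\E\brk[s]{1/\yprobg{P_\star}{x}{y}}$ cancel, and the optimality inequality collapses precisely to $\E\brk[s]{\yprobg{P}{x}{y}/\yprobg{P_\star}{x}{y}} \le 1$, which is \cref{eq:first-order-opt}. For the ``in particular'' claim I would take $P$ to be the point mass at $h$, so that $\yprobg{P}{x}{y} = (1-\gamma)\indh{x}{y}(h) + \gamma/K \ge (1-\gamma)\indh{x}{y}(h)$; plugging this into \cref{eq:first-order-opt} and dividing by $1-\gamma \ge \tfrac12$ yields $\E\brk[s]{\indh{x}{y}(h)/\yprobg{P_\star}{x}{y}} \le 1/(1-\gamma) \le 2$.

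The calculation is largely mechanical; the two points that need care are (i) using the correct optimality condition — the variational inequality rather than $\nabla\obj(P_\star)=0$, since $P_\star$ may lie on the boundary of $\Delta_N$ — and (ii) observing that the $\gamma/K$ correction terms cancel when forming the difference $\langle\nabla\obj(P_\star),P-P_\star\rangle$, which is exactly what produces the clean constant-free bound ``$\le 1$''.
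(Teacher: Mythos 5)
Your proof is correct and follows essentially the same route as the paper's: the first-order variational inequality $\nabla\obj(P_\star)\cdot(P-P_\star)\ge 0$, the cancellation of the $\gamma/K$ terms when the gradient is paired with $P-P_\star$, and the same specialization to a point mass with the factor $1/(1-\gamma)\le 2$. No gaps; your added remarks on differentiating under the expectation and existence of $P_\star$ are fine but not needed beyond what the paper assumes.
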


    The next key lemma, whose proof is also in \cref{sec:app-phase-1} ensures that a sufficiently approximate minimizer of $\obj$ is also a point in which the gradient $\obj$ is bounded in $L_\infty$-norm, establishing the property given in \cref{eq:bounded-variance}. 

    \begin{lemma}
    \label{lem:expected-log-subopt}
    Suppose $\gamma \leq \frac12$ and assume that for all $P_\star \in \Delta_N$ the following holds for $\hat P$ which was computed in phase 1 of \cref{alg:bandit-pac}:
    \begin{align*}
        \obj(\hat P) - \obj(P_\star)
        \leq
        \mu.
    \end{align*}
    Then,
    \begin{align*}
        \norm{\nabla \obj(\hat P)}_\infty \leq 2 + \sqrt{\frac{2 \mu K^2}{\gamma^2}}.
    \end{align*}
    In particular, setting $\mu = \gamma^2 / 2 K^2$ gives $\norm{\nabla \obj(\hat P)}_\infty \leq 3$.
    \end{lemma}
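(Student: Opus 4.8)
The plan is to relate the $L_\infty$ norm of the gradient $\nabla\obj(\hat P)$ to the suboptimality $\obj(\hat P)-\obj(P_\star)$ by testing first-order optimality against the worst hypothesis. Fix $h\in\calH$ and let $P_h\in\Delta_N$ be the delta distribution on $h$. Since $\obj$ is convex on $\Delta_N$, first-order optimality of $P_\star$ gives $\langle\nabla\obj(\hat P),P_h-\hat P\rangle$ is controlled; more precisely, by convexity along the segment from $\hat P$ toward $P_h$ we have $\obj(\hat P)+\langle\nabla\obj(\hat P),P_h-\hat P\rangle\le\obj(P_h)$. I would then compute $\langle\nabla\obj(\hat P),P_h\rangle=(\nabla\obj(\hat P))_h$ using the formula from \eqref{eq:phase-1-objective-grad}: up to the $(1-\gamma)$ constant factor coming from differentiating $-\log\yprobg{P}{x}{y}$, this equals $\E[\indh{x}{y}(h)/\yprobg{\hat P}{x}{y}]$, which is exactly the quantity we want to bound by a constant. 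The term $\langle\nabla\obj(\hat P),\hat P\rangle$ is simply $\E[(1-\gamma)\yprob{\hat P}{x}{y}/\yprobg{\hat P}{x}{y}]$, which lies in $[0,1]$, so it contributes only a bounded additive error.

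The remaining ingredient is an upper bound on $\obj(P_h)-\obj(P_\star)$. Here I would use the hypothesis of the lemma together with a bound relating $\obj(P_h)$ to $\obj(\hat P)$: since $\obj(\hat P)-\obj(P_\star)\le\mu$ and the whole argument only needs $\obj(P_h)-\obj(\hat P)$ to be small, the natural route is instead to apply convexity in the other direction or to use the self-concordance / "restricted strong convexity" remark hinted at in the overview. Concretely, I expect the clean path is: for any $P$, $\obj(P)\ge\obj(P_\star)$ trivially, and combining $\obj(\hat P)+\langle\nabla\obj(\hat P),P_h-\hat P\rangle\le\obj(P_h)$ with $\obj(\hat P)\le\obj(P_\star)+\mu\le\obj(P_h)+\mu$ yields $\langle\nabla\obj(\hat P),P_h-\hat P\rangle\le\obj(P_h)-\obj(\hat P)\le\mu$, which would give a $\mu$-type bound directly but without the $\sqrt{\mu}$. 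The appearance of $\sqrt{\mu K^2/\gamma^2}$ in the statement signals that the actual argument must go through a quadratic (strong-convexity-like) lower bound: namely, one shows $\obj(P_h)-\obj(P_\star)\gtrsim (\gamma^2/K^2)\cdot(\text{something})^2$ when the gradient coordinate is large, so that bounding this below by the known suboptimality forces the gradient coordinate to be at most $2+\sqrt{2\mu K^2/\gamma^2}$. I would obtain this quadratic lower bound from the local strong convexity of $-\log$ near its argument (the argument $\yprobg{P}{x}{y}$ is bounded below by $\gamma/K$, giving a curvature lower bound of order $(K/\gamma)^2$ in the relevant direction), combined with \cref{lemma:first-order-guarantee} which controls the first-order term via $\E[\yprobg{P_h}{x}{y}/\yprobg{P_\star}{x}{y}]\le 1$.

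The main obstacle, I expect, is handling this curvature argument carefully: $\obj$ is not strongly convex globally on the simplex, so I cannot simply invoke strong convexity. The key is that the only direction that matters is the segment from $P_\star$ (or $\hat P$) toward the vertex $P_h$, and along that one-dimensional segment the function $t\mapsto\E[-\log((1-\gamma)((1-t)\yprob{P_\star}{x}{y}+t\,\indh{x}{y}(h))+\gamma/K)]$ is a one-dimensional convex function whose second derivative is $\E[(1-\gamma)^2(\indh{x}{y}(h)-\yprob{P_\star}{x}{y})^2/\yprobg{\cdot}{x}{y}^2]$, and since every denominator is at least $\gamma/K$ this is bounded below in terms of $(\gamma/K)^2$ times the squared first derivative at $t=0$ (which, by \cref{lemma:first-order-guarantee}, is nonnegative and whose magnitude is essentially $(\nabla\obj(P_\star))_h$ shifted). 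Taylor-expanding to second order and using that the suboptimality gap is at most $\mu$ then yields the claimed inequality. Setting $\mu=\gamma^2/2K^2$ makes $\sqrt{2\mu K^2/\gamma^2}=1$, giving the stated bound of $3$.
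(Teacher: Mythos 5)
Your high-level intuition — a self-concordance-type quadratic lower bound plus \cref{lemma:first-order-guarantee} to control the gradient at the optimum, which is where the $\sqrt{\mu}$ comes from — matches the paper, but the concrete argument you sketch has a genuine gap. First, your intermediate claim $\obj(P_h)-\obj(\hat P)\le\mu$ is false: the hypothesis gives $\obj(\hat P)\le\obj(P_\star)+\mu\le\obj(P_h)+\mu$, i.e.\ $\obj(P_h)-\obj(\hat P)\ge-\mu$, and in fact $\obj(P_h)-\obj(P_\star)$ can be as large as $\log(K/\gamma)$ (take $h$ that misclassifies a constant fraction of the mass), so it cannot play the role of "the known suboptimality" that forces the gradient coordinate to be small. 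Second, the linear test itself points the wrong way: $(\nabla\obj(\hat P))_h=-(1-\gamma)\,\E\bigl[\indh{x}{y}(h)/\yprobg{\hat P}{x}{y}\bigr]$ is \emph{negative}, so the convexity inequality $\nabla\obj(\hat P)\cdot(P_h-\hat P)\le\obj(P_h)-\obj(\hat P)$ only lower-bounds the quantity you must upper-bound. Third, your curvature argument runs along the segment from $P_\star$ (or $\hat P$) toward the vertex $P_h$, i.e.\ it lower-bounds $\obj(P_h)-\obj(P_\star)$; but the only gap the hypothesis controls is $\obj(\hat P)-\obj(P_\star)$, so the curvature has to be exploited between $\hat P$ and $P_\star$ themselves, with $h$ entering only through an indicator — this is the missing link in your plan.

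For comparison, the paper's proof proceeds pointwise in $(x,y)$: the Bregman gap $\objrand(\hat P;x,y)-\objrand(P_\star;x,y)-\nabla\objrand(P_\star;x,y)\cdot(\hat P-P_\star)$ equals $\omega(R_{x,y})$ with $\omega(z)=-\log z+z-1$ and $R_{x,y}=\yprobg{\hat P}{x}{y}/\yprobg{P_\star}{x}{y}$. By \cref{lem:log-bar-inequality} and $\yprobg{\cdot}{x}{y}\ge\gamma/K$, this is at least $\frac{\gamma^2}{2K^2}\bigl(\indh{x}{y}(h)/\yprobg{\hat P}{x}{y}-\indh{x}{y}(h)/\yprobg{P_\star}{x}{y}\bigr)^2$ for every $h$. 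Taking expectations, first-order optimality of $P_\star$ makes the linear term nonnegative, so the left-hand side is at most $\obj(\hat P)-\obj(P_\star)\le\mu$, and Jensen yields $\E\bigl[\indh{x}{y}(h)/\yprobg{\hat P}{x}{y}\bigr]\le\E\bigl[\indh{x}{y}(h)/\yprobg{P_\star}{x}{y}\bigr]+\sqrt{2\mu K^2/\gamma^2}$; \cref{lemma:first-order-guarantee} bounds the first expectation on the right by $2$, giving the claim. In short, the argument compares the gradient of $\obj$ at $\hat P$ to its gradient at $P_\star$ via the $\mu$-gap between those two points; no segment toward $P_h$ is ever used, and your proposal as written does not supply this comparison.
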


    The fact that the property given in \cref{eq:bounded-variance} can be guaranteed with high probability using as few as $M_1 = \Theta \brk*{\poly(K) \log \brk*{N / \delta}}$ samples relies on the following lemma, also proven in \cref{sec:app-phase-1}, which follows from the analysis of the stochastic Frank-Wolfe procedure (see \cref{sec:frank-wolfe} for the detailed analysis).

    \begin{lemma}
    \label{lem:phase-1-fw}
        Suppose $\gamma \leq \frac12$. If $M_1 = 58000 \brk*{K^8 / \gamma^8} \log \brk*{16 N / \delta}$ and $T = 240 \brk*{K^4 / \gamma^4} \sqrt{\log \brk*{16 N / \delta}}$, then with probability at least $1 - \delta / 2$, for all $P_\star \in \Delta_N$ it holds that
        \begin{align*}
            \obj(\hat P) - \obj(P_\star)
            \leq
            \frac{\gamma^2}{2K^2}.
        \end{align*}
    \end{lemma}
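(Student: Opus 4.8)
The plan is to obtain \cref{lem:phase-1-fw} from the convergence analysis of the stochastic Frank--Wolfe procedure developed in \cref{sec:frank-wolfe}, instantiated on the objective $\obj$ of \cref{eq:phase-1-objective}. The first step is to check that $\obj$ satisfies the hypotheses of that analysis. Since $\yprobg{P}{x}{y} \ge \gamma/K$ for every $P \in \Delta_N$, a direct computation shows that the $h$-th coordinate of $\nabla\objrand(P;x,y)$ equals $-(1-\gamma)\,\ind{h(x)=y}/\yprobg{P}{x}{y}$, hence $\norm{\nabla\objrand(P;x,y)}_\infty \le K/\gamma$ for all $(x,y)$ and $P$; moreover the Hessian $\nabla^2\objrand(P;x,y) = (1-\gamma)^2\,\indh{x}{y}\indh{x}{y}^{\top}/\yprobg{P}{x}{y}^2$ satisfies $v^{\top}\nabla^2\objrand(P;x,y)\,v \le (K/\gamma)^2\,\norm{v}_1^2$ because $\indh{x}{y}$ is a $0/1$ vector, so each $\objrand(\cdot;x,y)$, and therefore $\obj$, is $\beta$-smooth with respect to the $\ell_1$ norm with $\beta = K^2/\gamma^2$. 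Finally, the $\ell_1$ diameter of $\Delta_N$ is $D = 2$. Together with \cref{lemma:first-order-guarantee}, these are the ingredients the analysis needs; note that $\obj$ is \emph{not} smooth in the usual $\ell_2$ sense, which is why the $\ell_1$-tailored analysis of \cref{sec:frank-wolfe} is required.

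Given these facts I would invoke the guarantee of \cref{sec:frank-wolfe}. At its core is the inexact Frank--Wolfe progress inequality under $\ell_1$ smoothness: with $h_t \eqdef \obj(P_t) - \obj(P_\star)$ and $\epsilon_t \eqdef \norm{g_t - \nabla\obj(P_t)}_\infty$, using that $Q_t = \loo(g_t)$ minimizes $\langle g_t, \cdot\rangle$ over $\Delta_N$ and that $\obj$ is convex, one obtains $h_{t+1} \le (1-\eta_t)h_t + 2D\eta_t\epsilon_t + \tfrac12\beta D^2\eta_t^2$; unrolling with $\eta_t = 1/t$ bounds $h_{T+1}$ by an optimization term of order $\beta D^2/T$ (up to logarithmic factors) plus a gradient-error term of order $D\,T^{-1}\sum_{t\le T}\epsilon_t$. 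It then remains to control the accumulated SPIDER error. Between consecutive restarts $g_t$ is a telescoping sum, so its deviation from $\nabla\obj(P_t)$ is a martingale whose increments have magnitude controlled by $\beta\norm{P_t-P_{t-1}}_1 \le \beta\eta_{t-1}D = O(\beta/t)$, while at a restart $\tau_k = 2^k-1$ the error is that of an empirical mean of $b_{\tau_k}$ i.i.d.\ gradients bounded in $\ell_\infty$ by $K/\gamma$. The batch sizes $b_{\tau_k} = (\gamma/2K)^2\tau_k^2$ and $b_t = t$ balance these two sources so that, via a Freedman/Bernstein martingale bound union-bounded over the $N$ coordinates and the $O(\log T)$ restart epochs, with probability at least $1-\delta/2$ one gets $\epsilon_t = O((K^2/\gamma^2)\sqrt{\log(N/\delta)}/t)$ simultaneously for all $t \le T$.

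Putting the two pieces together, with probability at least $1-\delta/2$ we get $h_{T+1} = O((K^2/\gamma^2)\sqrt{\log(16N/\delta)}/T)$ after folding all logarithmic factors into $\log(16N/\delta)$, and substituting $T = 240(K^4/\gamma^4)\sqrt{\log(16N/\delta)}$ brings this to at most $\gamma^2/2K^2$; since $\hat P$ is the last iterate $P_{T+1}$ of \cref{alg:fw-main}, this is the claimed bound, valid for \emph{every} minimizer $P_\star$ of $\obj$ since all of them share the value $\obj(P_\star)$. One must also verify that \cref{alg:fw-main} never requests more data than $S$ contains: the total is $\sum_{t\le T}b_t = \sum_{t\le T}t + \sum_k(\gamma/2K)^2\tau_k^2 \le \tfrac12 T(T+1) + O((\gamma/K)^2 T^2) = O(T^2)$, which is at most $M_1 = 58000(K^8/\gamma^8)\log(16N/\delta)$ using $\gamma\le\tfrac12$ and $K\ge 2$ to absorb the restart terms; this, together with the fact that the pairs retained into $S$ are genuinely i.i.d.\ from $\calD$ (in Phase~1 the prediction is uniform and independent of $(x,y)$), licenses treating the samples fed to \cref{alg:fw-main} as i.i.d.\ $\calD$.

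I expect the main obstacle to be the high-probability control of $\epsilon_t$: turning the in-expectation SPIDER variance recursion into a deviation bound that holds uniformly over $t\le T$ with the right $\sqrt{\log(N/\delta)}$ scaling requires a careful Freedman-type martingale argument layered with union bounds over coordinates and over the logarithmically many restart epochs, and then the constants must be tracked through this bound \emph{and} the Frank--Wolfe recursion so as to land precisely on $\gamma^2/2K^2$ with $M_1$ and $T$ exactly as in the statement. Setting up and unrolling the $\ell_1$-smooth Frank--Wolfe recursion, by contrast, is routine once the $\ell_1$ geometry of the simplex is in place.
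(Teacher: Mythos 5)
Your proposal is correct and follows essentially the same route as the paper: establish that $\objrand$ is $G$-Lipschitz and $\beta$-smooth in the $L_1$--$L_\infty$ sense with $G=K/\gamma$, $\beta=K^2/\gamma^2$ (the paper bounds the gradient difference via H\"older rather than the Hessian, a cosmetic difference), take $D=2$ for the simplex, invoke the SPIDER Frank--Wolfe guarantee of \cref{thm:fw-generic}, and check that $\sum_{t\le T} b_t \le M_1$. The extra material you include (sketching the internal FW recursion and the martingale control of the SPIDER error, and noting that the retained Phase-1 samples are i.i.d.\ from $\calD$) is just a re-derivation of \cref{lem:genericFW,lem:spider} and of facts the paper uses implicitly, so there is no substantive divergence.
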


     We are now in position to prove \cref{thm:main} by analyzing the second phase of \cref{alg:bandit-pac}. We make use of the guarantee of the first phase given in \cref{lem:expected-log-subopt} with respect to the exploration distribution $\hat P$, which allows us to use Bernstein's concentration inequality in order to compute an $\eps$-optimal hypothesis in high probability using only $\approx K / \eps + 1 / \eps^2$ samples.

         \begin{proof}[Proof of \cref{thm:main}]
            We first show that we can in fact use \cref{alg:fw-main} as specified in the theorem's statement. That is, we prove that the calls to the linear optimization oracle, denoted by $\loo$ in \cref{alg:fw-main} can be implemented using the weighted ERM oracle $\oracle_\calH$. Indeed, we first note that for any $g \in \R^N$ it holds that 
            \begin{align*}
                \loo(g) = \argmin_{P \in \Delta_N} \brk[c]*{P \cdot g} = \argmin_{h \in \calH} \brk[c]*{g_h},
            \end{align*}
            so it suffices to show that this can be represented in the form given in \cref{eq:argmax-oracle} for the SPIDER gradient estimates used in \cref{alg:fw-main}, which we denote by $g_t$. It is straightforward to see each $g_t$ is a linear combination of terms of the form $\nabla \objrand(P, x, y)$ where $P$ is either $P_t$ or $P_{t-1}$. Thus, we show that for $g = \frac{1}{n} \sum_{i=1}^{n}\nabla \objrand(P_t,x_i,y_i)$, $\loo(g)$ can be computed by a call to the weighted ERM oracle. Indeed,
            \begin{align*}
                \loo(g)
                =
                \argmin_{h \in \calH} \brk[c]*{- (1-\gamma) \frac{1}{n}\sum_{i=1}^{n}\frac{\ind{h(x_i)=y_i}}{\yprobg{P_t}{x_i}{y_i}}} = \argmin_{h \in \calH} \sum_{i=1}^{n} \alpha_i \ind{h(x_i) \neq y_i},
            \end{align*}
            where $\alpha_i = (1-\gamma) / n \yprobg{P_t}{x_i}{y_i}$.
            Now, assume that after phase 1, \cref{alg:bandit-pac} computed $\hat P \in \Delta_N$ with
            \begin{align*}
                \max_{h \in \calH} \bbE \brk[s]*{\frac{\indh{x}{y}(h)}{\yprobg{\hat P}{x}{y}}} \leq 3.
            \end{align*}
            Fix some $h \in \calH$. For $i \in \brk[c]*{1,\ldots,M_2}$ define $X_i(h) = \frac{\indh{x_i}{y_i}(h)\ind{y_i = \hat y_i}}{\yprobg{\hat P}{x_i}{y_i}}$. Note that $\bbE[X_i(h)] = \Pr[h(x) = y]$ and that $X_{1}(h),\ldots,X_{M_2}(h)$ are i.i.d.\ 
            Additionally, by the guarantee of phase 1:
            \[
                \mathrm{Var}[X_i(h)] 
                \le 
                \bbE[X_i(h)^2] 
                = 
                \bbE \brk[s]4{\frac{\indh{x_i}{y_i}(h)\ind{y_i = \hat y_i}}{\brk{\yprobg{\hat P}{x_i}{y_i}}^2}}
                =
                \bbE \brk[s]4{\frac{\indh{x}{y}(h)}{\yprobg{\hat P}{x}{y}}}
                \le 3.
            \]
            Define $\bar X(h) = \frac{1}{M_2} \sum_{i=1}^{M_2} X_i(h)$, and note that by the form of the weighted ERM oracle, $\hat h = \argmax_{h \in \calH} \bar X(h)$.
            Therefore, by an application of Bernstein's inequality (see e.g. \cite{lattimore2020bandit}, page 86) and a union bound, we have with probability at least $1-\delta / 2$ for all $h \in \calH$:
            \[
                \abs{\bar X(h) - \Pr[h(x) = y]}
                \le
                \sqrt{\frac{6 \log (2N/\delta)}{M_2}} + \frac{2 K \log (2N/\delta)}{\gamma M_2}. 
            \]

            Hence for $\hat h$, with probability at least $1-\delta / 2$ we have
            \begin{align*}
                L_\calD(\hat h) - L_\calD(h^\star)
                &=
                \Pr[\hat h(x) \ne y] - \Pr[h^\star(x) \ne y] \\
                &=
                \Pr[h^\star(x) = y] - \Pr[\hat h(x) = y] \\
                &\le
                \Pr[h_\star(x) = y] - \bar X(h^\star) + \bar X(\hat h) - \Pr[\hat h(x) = y] \\
                &\le
                \sqrt{\frac{36 \log (2N/\delta)}{M_2}} + \frac{4 K \log (2N/\delta)}{\gamma M_2}. 
            \end{align*}
            Choosing $M_2 \ge \max\{144 \log(2N/\delta) / \eps^2, 8 K \log(2N/\delta) / (\gamma \eps)\}$ gives
            $
                L_\calD(\hat h) - L_\calD(h^\star) \le \eps
            $.          
            Thus, using \cref{lem:expected-log-subopt} and \cref{lem:phase-1-fw}, we are only left with proving that with high probability, collecting $M_1$ samples for $S$ takes at most $O \brk{K^9 \log \brk*{N / \delta}}$ steps. This essentially follows from the fact that a binomial random variables is smaller than half of its expected value with very small probability. Formally, we use lemma F.4 of \cite{dann2017unifying} to deduce that if $X \sim \operatorname{Bin} \brk{4KM_1, 1/K}$ then with probability at least $1-\delta$ it holds that
            $
                X
                \geq
                2M_1 - \log(1/\delta)
                \geq
                M_1,
            $
            which means that $4KM_1$ trials suffice to guarantee that with probability at least $1 - \delta$, the dataset $S$ will contain at least $M_1$ samples. 
            Thus, the proof of \cref{thm:main} is complete once we make the observation that
            $2K / \eps \leq K^2 + 1 / \eps^2$ (using the AM-GM inequality) so that the $K / \eps$ term is of lower order and can be dropped from the final bound.
         \end{proof}

\section{Extension to Natarajan Classes}
\label{sec:infinite}

In this section we extend our result for finite classes given in \cref{thm:main} to general, possibly infinite hypothesis classes $\calH$ with finite \emph{Natarajan dimension}.
The Natarajan dimension is an extension of the VC dimension to the multiclass setting, defined as follows: 

\begin{defn}[Natarajan dimension; \citealp{natarajan1989learning}]
The Natarajan dimension of  a hypothesis class $\calH \subseteq \calX \to \calY$ is the largest integer $d$ for which there exist $d$ points $x_1,\ldots,x_d \in \calX$ and $d$ pairs of distinct labels $\{y_{1,1}, y_{1,2}\},\ldots, \{y_{d,1}, y_{d,2}\}\in \smash{\binom{\abs{\calY}}{2}}$ such that all $2^d$ sequences of the form $(x_1,y_1),\ldots, (x_d,y_d)$, with $y_i\in \{y_{i1}, y_{i2}\}$, are realizable by $\calH$. 
\end{defn}

Note that in the binary case, when $\calY=\{0,1\}$, the Natarajan dimension reduces to the VC dimension.
Our main result in this case effectively replaces the $\log\lvert\calH\rvert$ term, relevant when $\calH$ is finite, with the Natarajan dimension $d_N$ of $\calH$.

\begin{theorem}
\label{thm:natarajan-main}
    Let $\calH  : \calX \to \calY$ be a hypothesis class of finite Natarajan dimension $d_N$. Then there exists a bandit multiclass classification algorithm which outputs a hypothesis $\hat h$ with $L_\calD(\hat h) - \inf_{h \in \calH} L_\calD(h) \leq \eps$ with a sample complexity of
    $
        \smash{\widetilde{O}} \brk{\brk{K^9 + \ifrac{1}{\eps^2}}d_N \log(\ifrac{1}{\delta})}
        .
    $
\end{theorem}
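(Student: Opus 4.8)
The plan is to reduce the infinite-class case to the finite-class result of \cref{thm:main} via a uniform-convergence / $\epsilon$-net argument over $\calH$, using the Natarajan dimension to control the effective size of the class. The obstacle is that $\calH$ may be infinite, so we cannot directly plug $\log\abs{\calH}$ into the bound; instead we want to replace $\calH$ by a finite subclass that is a good proxy on a suitably chosen sample.

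\textbf{Step 1: Draw an initial i.i.d.\ sample and pass to a finite subclass.} First I would draw a modest number $m_0 = \widetilde{O}(d_N/\epsilon^2 \cdot \text{polylog})$ of examples $x_1,\ldots,x_{m_0}$ — but note we are in the bandit setting, so to actually obtain labels we predict uniformly at random and keep only the (roughly $1/K$ fraction of) correctly-labeled pairs, exactly as in Phase 1 of \cref{alg:bandit-pac}; this costs an extra $O(K)$ factor in this preliminary stage but only on a term that is already lower order. By the multiclass Natarajan-dimension generalization bound (Natarajan's uniform convergence theorem / the fundamental theorem of multiclass PAC learning, \citealp{natarajan1989learning,ben1992characterizations,daniely2011multiclass}), with $m_0 = \widetilde{O}((d_N/\epsilon^2)\log(1/\delta))$ labeled samples, empirical risks are uniformly within $\epsilon/4$ of true risks over all of $\calH$. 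Restricting $\calH$ to its distinct behaviors on $\{x_1,\ldots,x_{m_0}\}$ yields a finite subclass $\calH'$ with $\log\abs{\calH'} = O(d_N \log(Km_0)) = \widetilde{O}(d_N)$ by the Natarajan analogue of the Sauer–Shelah lemma. Crucially $\calH'$ contains a hypothesis whose true loss is within $\epsilon/4$ of $\inf_{h\in\calH} L_\calD(h)$ (take the empirical risk minimizer over $\calH$ on the sample, which agrees on the sample with some element of $\calH'$, hence has empirically — and therefore, by uniform convergence, truly — nearly optimal loss).

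\textbf{Step 2: Run the finite-class algorithm on $\calH'$.} Now invoke \cref{thm:main} with the hypothesis class $\calH'$, accuracy parameter $\epsilon/2$, and confidence $\delta/2$. Note the weighted ERM oracle over $\calH$ can be used to implement a weighted ERM oracle over $\calH'$ on any sample drawn from the subsequent interaction — either by first intersecting with the defining sample, or more simply by observing that ERM over $\calH$ restricted to behaviors realized on the combined sample is an ERM over $\calH'$ on that sample. This produces $\hat h$ with $L_\calD(\hat h) - \min_{h\in\calH'} L_\calD(h) \le \epsilon/2$ with probability $1-\delta/2$, using $O((K^9 + 1/\epsilon^2)\log(\abs{\calH'}/\delta)) = \widetilde{O}((K^9 + 1/\epsilon^2) d_N \log(1/\delta))$ samples.

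\textbf{Step 3: Combine.} A union bound over the two failure events, together with the chain $L_\calD(\hat h) \le \min_{h\in\calH'} L_\calD(h) + \epsilon/2 \le \inf_{h\in\calH} L_\calD(h) + \epsilon/4 + \epsilon/2 \le \inf_{h\in\calH} L_\calD(h) + \epsilon$, gives the claim; the two sample-complexity contributions add, and the preliminary $\widetilde{O}(K d_N/\epsilon^2)$ term is dominated by $\widetilde O((K^9 + 1/\epsilon^2)d_N)$ via AM–GM (as in the proof of \cref{thm:main}), so the total is $\widetilde{O}((K^9 + 1/\epsilon^2) d_N \log(1/\delta))$. The main point requiring care is Step 1: ensuring the finite proxy class $\calH'$ genuinely preserves a near-optimal hypothesis and that the multiclass uniform-convergence bound is applied with the correct dependence on $d_N$ (using the Natarajan–Sauer lemma $\abs{\calH'} \le (m_0 K^2)^{d_N}$, so $\log\abs{\calH'} = O(d_N\log(m_0 K))$, which is where the suppressed logarithmic factors in $\widetilde{O}$ arise). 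Everything else is bookkeeping and a direct appeal to \cref{thm:main}.
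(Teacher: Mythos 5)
Your overall plan—discretize $\calH$ to a finite subclass using an initial sample and then invoke \cref{thm:main}—is the same route the paper takes (\cref{prop:natarajan}). But your Step~1 has a genuine quantitative flaw that breaks the claimed bound. You build the discretization from $m_0 = \widetilde{O}(d_N/\eps^2)$ \emph{labeled} examples, and in the bandit setting harvesting labels by uniform guessing costs a factor $K$, so the preliminary phase uses $\widetilde{O}(K d_N/\eps^2)$ interaction rounds. Your claim that this term is "dominated by $\widetilde{O}((K^9+1/\eps^2)d_N)$ via AM--GM" is false: AM--GM gives $2K/\eps \le K^2 + 1/\eps^2$ (that is the term the paper's proof of \cref{thm:main} absorbs), but $K/\eps^2$ is \emph{not} $O(\poly(K) + 1/\eps^2)$ — as $\eps \to 0$ the ratio tends to $K$. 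Hence your construction yields a total of $\widetilde{O}((K^9 + K/\eps^2)d_N\log(1/\delta))$, i.e.\ it reinstates the multiplicative $K$ on the leading $1/\eps^2$ term, which is precisely the dependence (essentially the bound of Daniely et al.) that \cref{thm:natarajan-main} is meant to eliminate.

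The fix, and the way the paper proceeds, is to make the discretization step both label-free and cheaper in $\eps$. Observe that the finite proxy class only needs to be an $(\eps/2)$-cover of $\calH$ in the disagreement pseudo-metric $d(h,h') = \Pr_x[h(x)\neq h'(x)]$, which depends only on the marginal over $\calX$: so one can draw $s$ \emph{unlabeled} points (predict arbitrarily and ignore the feedback, paying no $K$ factor), take one representative per realizable labeling pattern on these points, and bound the size via the Natarajan Sauer--Shelah lemma, exactly as you do. The cover property is then a realizable-type event, established by a double-sample symmetrization argument applied to the binary symmetric-difference class $\calH_\Delta=\{x\mapsto \ind{h_1(x)\neq h_2(x)}\}$ (whose growth function is at most the square of that of $\calH$), and it holds with $s = O\bigl((d_N\ln K\ln(1/\eps)+\ln(1/\delta))/\eps\bigr)$ — note $1/\eps$, not $1/\eps^2$, and no labels — which is genuinely lower order; this is the content of \cref{lem:cover}. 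Your uniform-convergence-of-risks argument requires both labels and $1/\eps^2$ samples, and that combination is exactly what produces the fatal $K/\eps^2$ term. (A minor additional slip: with $\eps/4$ two-sided uniform deviations, the representative of the empirical minimizer is only $\eps/2$-optimal in $\calH$, not $\eps/4$-optimal; this only affects constants.)
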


As discussed, this result improves the classical result of~\citet{daniely2011multiclass}, who provided an upper bound on the sample complexity of PAC learning with bandit feedback, given by 
$\smash{\widetilde{O}}(\ifrac{K d_N}{\eps^2})$,
and left obtaining tighter bounds as an open question. 
In particular, for small target excess loss ($\eps \to 0$), our bound eliminates the linear dependence on the number of labels $K$, thereby matching the $\smash{\widetilde\Theta}(\ifrac{d_N}{\eps^2})$ bound from the full information setting~\citep{natarajan1989learning}.

\cref{thm:natarajan-main} follows directly from the following technical result, which is proven in \cref{sec:app-natarajan}, when put together with \cref{thm:main}.
\begin{proposition}
\label{prop:natarajan}
Assume that the sample complexity of PAC learning with bandit feedback a finite class of size $N$ over a label-space of size $K$ is at most $m(N,K,\eps,\delta)$, where $\epsilon,\delta$ are the error and confidence parameters. Then, the sample complexity of learning an infinite class $\mathcal{H}$ is at most 
\[s + m(S,K,\epsilon/2,\delta/2),\] 
where
\begin{align*}
s&:=O\Bigl(\frac{d_N\ln(K)\ln(1/\epsilon) + \ln(1/\delta)}{\epsilon}\Bigr),\\
S&:=\sum_{i=0}^{d_N}{\binom{s}{i}}{\binom{K}{2}}^i\leq \Bigl(\frac{e\,s}{d_N}\Bigr)^{d_N}K^{2d_N},
\end{align*}
and $d_N$ is the Natarajan dimension of $\mathcal{H}$.
\end{proposition}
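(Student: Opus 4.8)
The plan is to reduce the infinite-class problem to the finite-class one by a ``draw-then-restrict'' argument, specialized to the bandit protocol. First I would split the budgets as $\eps = \eps/4 + \eps/4 + \eps/2$ and $\delta = \delta/2 + \delta/2$, and fix — for the analysis only — a hypothesis $h^\star \in \calH$ with $L_\calD(h^\star) \le \inf_{h \in \calH} L_\calD(h) + \eps/4$ (it exists by definition of the infimum, and is never used by the algorithm). Next, the algorithm spends $s$ rounds of the protocol merely to observe $s$ i.i.d.\ examples $x_1,\dots,x_s \sim \calD$; the labels it predicts in these rounds, and the bandit feedback, are irrelevant. It then forms $\calH' \subseteq \calH$ by keeping one representative from each equivalence class of $\calH$ under the restriction map $h \mapsto (h(x_1),\dots,h(x_s))$. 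By the multiclass Sauer--Shelah (Natarajan) lemma, the number of such classes — hence $\abs{\calH'}$ — is at most $S = \sum_{i=0}^{d_N}\binom{s}{i}\binom{K}{2}^i \le (es/d_N)^{d_N} K^{2 d_N}$, the last step being arithmetic. Finally, the algorithm runs the assumed finite-class bandit PAC learner on the finite class $\calH'$ with parameters $\eps/2$ and $\delta/2$, using $m(S,K,\eps/2,\delta/2)$ fresh rounds, and outputs its hypothesis $\hat h$.

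The crux is to show that $\calH'$ contains an $(\eps/2)$-optimal hypothesis with probability at least $1-\delta/2$. The natural candidate is $\tilde h^\star \in \calH'$, the representative of the equivalence class of $h^\star$, which by construction satisfies $\tilde h^\star(x_i) = h^\star(x_i)$ for all $i \le s$. The key point — and the reason $s$ scales like $1/\eps$ rather than $1/\eps^2$ — is that one should \emph{not} argue via uniformly estimating losses, but rather observe that $x_1,\dots,x_s$ is a realizable i.i.d.\ sample for the ``target'' $h^\star$, so a realizable-PAC generalization bound applies. Concretely, since the growth function of $\calH$ on any $s$ points is at most $S$ by the Natarajan lemma, the classical argument shows that for $s \ge c(\ln S + \ln(1/\delta))/\eps$ with an absolute constant $c$, with probability at least $1-\delta/2$ \emph{every} $h \in \calH$ with $h(x_i) = h^\star(x_i)$ for all $i \le s$ satisfies $\Pr_{x \sim \calD}[h(x) \ne h^\star(x)] \le \eps/4$. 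Plugging $\ln S \le d_N \ln(es/d_N) + 2 d_N \ln K$ into this inequality and solving for $s$ shows that the stated choice $s = O\big((d_N \ln K \, \ln(1/\eps) + \ln(1/\delta))/\eps\big)$ suffices, the $\ln(1/\eps)$ factor arising from unwinding the implicit inequality $s \gtrsim d_N \ln(s/d_N)/\eps$.

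Applying this uniform statement to $h = \tilde h^\star$, together with the pointwise inequality $\abs{\ind{\tilde h^\star(x) \ne y} - \ind{h^\star(x) \ne y}} \le \ind{\tilde h^\star(x) \ne h^\star(x)}$ for $0$--$1$ loss, gives $\abs{L_\calD(\tilde h^\star) - L_\calD(h^\star)} \le \Pr_{x \sim \calD}[\tilde h^\star(x) \ne h^\star(x)] \le \eps/4$, hence $\min_{h \in \calH'} L_\calD(h) \le L_\calD(\tilde h^\star) \le \inf_{h \in \calH} L_\calD(h) + \eps/2$. Combining this with the finite-class learner's guarantee $L_\calD(\hat h) \le \min_{h \in \calH'} L_\calD(h) + \eps/2$ and a union bound over the two failure events yields $L_\calD(\hat h) \le \inf_{h \in \calH} L_\calD(h) + \eps$ with probability at least $1-\delta$, while the total number of protocol rounds used is $s + m(S,K,\eps/2,\delta/2)$, exactly as claimed.

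The hard part is entirely the second paragraph: correctly marrying the multiclass Sauer--Shelah bound with the realizable PAC generalization rate to get the $1/\eps$ dependence (with the extra logarithmic factor). Beyond that, the argument needs care on two small points: $\inf_{h \in \calH} L_\calD(h)$ need not be attained, which is why we fix $h^\star$ only within $\eps/4$ of it and reserve part of the accuracy budget for this slack; and $\tilde h^\star$ is selected as a function of the very sample $x_1,\dots,x_s$, so one must invoke the \emph{uniform}-over-all-sample-consistent-hypotheses form of the realizable bound rather than a statement about a single, sample-independent hypothesis. The remaining steps — the protocol bookkeeping, the two arithmetic bounds on $S$, and the union bound — are routine.
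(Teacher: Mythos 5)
Your proposal is correct, and its skeleton is the same as the paper's: spend $s$ protocol rounds only to collect unlabeled examples, discretize $\calH$ by keeping one representative per restriction pattern on $x_1,\dots,x_s$, bound the number of patterns by the Natarajan Sauer--Shelah--Perles lemma to get $\lvert\calH'\rvert\le S$, and then run the assumed finite-class bandit learner with parameters $\eps/2,\delta/2$. Where you diverge is in the density step, which is the bulk of the paper's proof. The paper proves a stronger statement: with probability $1-\delta/2$ the finite class is an $(\eps/2)$-cover of \emph{all} of $\calH$, by a uniform convergence argument over the symmetric-difference class $\calH_\Delta=\{\mathds{1}\{h_1(\cdot)\neq h_2(\cdot)\}: h_1,h_2\in\calH\}$, whose growth function is bounded by the \emph{square} of that of $\calH$, followed by the double-sample symmetrization bound $\Pr[E]\le 2\mathcal{G}_\Delta(2s)e^{-\eps s/4}$. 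You instead fix (before sampling, for analysis only) a near-optimal $h^\star$ within $\eps/4$ of the infimum and only show that the representative of its equivalence class is $(\eps/4)$-close to it, via the standard realizable-consistency bound applied to the one-sided class $\{\mathds{1}\{h(\cdot)\neq h^\star(\cdot)\}:h\in\calH\}$ with the all-zero target; this needs only the growth function of $\calH$ itself (not its square) and is a legitimately lighter argument, at the cost of requiring the explicit $\eps/4$ slack for a possibly unattained infimum and the care you correctly note about $\tilde h^\star$ being sample-dependent (handled by the uniform-over-consistent-hypotheses form of the bound, which is again a double-sample argument). Both routes give $s=O\bigl((d_N\ln K\ln(1/\eps)+\ln(1/\delta))/\eps\bigr)$ up to constants, and your bookkeeping ($\eps/4+\eps/4+\eps/2$, union bound over the two failure events, total rounds $s+m(S,K,\eps/2,\delta/2)$) is sound; the paper's full-cover statement is marginally stronger than needed for the proposition, while yours is the minimal claim that suffices.
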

The proposition is proven by using the first \(s\) examples to construct a finite discretization of size \(S\) of the class \(\mathcal{H}\), which is \((\epsilon/2)\)-dense in \(\mathcal{H}\) in the sense that every \(h \in \mathcal{H}\) is \((\epsilon/2)\)-close to some \(h'\) in the discretization. Then, we apply our algorithm for finite classes on this finite discretization. Most of the proof is dedicated to establishing that the discretization is dense and to upper bounding its size. A similar result for discretizing binary classes, with the VC dimension replacing the Natarajan dimension, is well known. In the VC case, the proof is based on considering the class of all symmetric differences of hypotheses from \(\mathcal{H}\) and analyzing the VC dimension of that class using standard techniques. In our proof, we follow a similar argument, but the analysis of the VC dimension of the symmetric difference class, which remains binary even in the multiclass setting, is more nuanced.

\begin{proof}[Proof of \cref{thm:natarajan-main}]
    By \cref{thm:main}, we have an upper bound on the sample complexity for PAC learning with bandit feedback over finite classes of size $N$ of 
    \begin{align*}
        m \brk{N,K,\eps,\delta} = O \brk*{(K^9 + 1/\eps^2)\log (N/\delta)}.
    \end{align*}
    Thus, the proof follows immediately from \cref{prop:natarajan} together with the fact that:
    \begin{align*}
        \log \brk*{\frac{S}{\delta}}
        \leq
        d_N \log \brk*{\frac{es}{d_N}} + 2d_N \log K + \log \frac{1}{\delta},
    \end{align*}
    where $s$ and $S$ are defined in the statement of \cref{prop:natarajan} (note that $\log (s)$ only contributes logarithmic terms to the overall bound).
\end{proof}

\section*{Acknowledgments}

This project has received funding from the European Research Council (ERC) under the European Union’s Horizon 2020 research and innovation program (grant agreements No. 101078075; 882396). Views and opinions expressed are however those of the author(s) only and do not necessarily reflect those of the European Union or the European Research Council. Neither the European Union nor the granting authority can be held responsible for them.
This project has also received funding from 
the Israel Science Foundation (ISF, grant numbers 2549/19;  2250/22), the Yandex Initiative for Machine Learning at Tel Aviv University, the Tel Aviv University Center for AI and Data Science (TAD), the Len Blavatnik and the Blavatnik Family foundation, 
and from the Adelis Foundation.

Shay Moran is a Robert J.\ Shillman Fellow; supported by ISF grant 1225/20, by BSF grant 2018385, by an Azrieli Faculty Fellowship, by Israel PBC-VATAT, by the Technion Center for Machine Learning and Intelligent Systems (MLIS), and by the European Union (ERC, GENERALIZATION, 101039692). Views and opinions expressed are however those of the author(s) only and do not necessarily reflect those of the European Union or the European Research Council Executive Agency. Neither the European Union nor the granting authority can be held responsible for them.

\bibliographystyle{abbrvnat}
\bibliography{bibliography}

\newpage
\appendix
\crefalias{section}{appendix}

\section[Proofs for Main Section]{Proofs for \cref{sec:alg}}
\label{sec:app-phase-1}

        \begin{proof}[Proof of \cref{lemma:first-order-guarantee}]
            First note that the gradient of $\obj(\cdot)$ is given by
            \begin{align*}
                \nabla \obj(P)
                &=
                \E \brk[s]*{- \frac{(1-\gamma) \indh{x}{y}}{\yprobg{P}{x}{y}}}.
            \end{align*}
            Thus, using a first-order optimality condition for $P_\star$, the following holds that for any $P \in \Delta_N$,
            \begin{align*}
                \nabla \obj(P_\star) \cdot \brk*{P - P_\star} \geq 0,
            \end{align*}
            which amounts to
            \[
                 \mathbb{E}\brk[s]3{\frac{(1-\gamma)\brk*{P - P_\star} \cdot \indh{x}{y}}{\yprobg{P}{x}{y}}} 
                \le 
                0,
            \]
            or equivalently,
            \[
            \E \brk[s]*{\frac{\yprobg{P}{x}{y} - \yprobg{P_\star}{x}{y}}{\yprobg{P_\star}{x}{y}}} \leq 0,
            \]
            which rearranges to the first inequality to be proven. Letting $P$ be the delta distribution on some $h \in \calH$, we have proven that
            \[
                \mathbb{E}\brk[s]3{(1-\gamma)\frac{\indh{x}{y}}{\yprobg{P_\star}{x}{y}}} \le 1,                
            \]
            and the second inequality follows since $\gamma \leq \frac12$.
        \end{proof}

    \begin{proof}[Proof of \cref{lem:expected-log-subopt}]
            Assume that $\hat{P}$ minimizes $\obj$ up to an additive error of $\mu$, that is, the assumption given in the statement of the lemma.
            Now, note that for every $(x,y) \in \calX \times \calY$, using the explicit form of $\objrand(\cdot; x,y)$ and its gradient, it holds that:
            \begin{align*}
                \objrand(\hat{P}; x,y) - \objrand(P_\star; x,y) - \nabla \objrand(P_\star; x,y) \cdot \brk*{\hat{P} - P_\star}
                &=
                \omega(R_{x,y}),
            \end{align*}
            where $\omega(z) = - \log z + z - 1$ and $R_{x,y} = \yprobg{\hat P}{x}{y} / \yprobg{P_\star}{x}{y}$. 
            Using the lower bound $\omega(z) \geq \frac12 \min \brk[c]*{(1-z)^2, (1-\frac{1}{z})^2}$ 
            (see \cref{lem:log-bar-inequality} in \cref{sec:app-phase-1}; this is where the self-concordance of $\phi$ comes in) we obtain that for all $h \in \calH$,
            \begin{align*}
                \objrand(\hat{P}; x,y) - \objrand(P_\star; x,y) - \nabla \objrand(P_\star; x,y) \cdot \brk*{\hat{P} - P_\star}
                &\geq
                \frac{\gamma^2}{2K^2} \brk*{\frac{1}{\yprobg{\hat P}{x}{y}} - \frac{1}{\yprobg{P_\star}{x}{y}}}^2 \\
                &\geq
                \frac{\gamma^2}{2K^2} \brk*{\frac{\indh{x}{y}(h)}{\yprobg{\hat P}{x}{y}} - \frac{\indh{x}{y}(h)}{\yprobg{P_\star}{x}{y}}}^2.
            \end{align*}
            Taking expectation of the inequality over $(x,y) \sim \calD$ while using a first-order optimality condition for $P_\star$ with respect to $\obj$ and the fact that $\bbE[(\cdot)^2] \geq \bbE^2[\cdot]$, we obtain
            \begin{align*}
                \mu
                \geq
                \obj(\hat{P}) - \obj(P_\star)
                \geq
                \frac{\gamma^2}{2K^2} \brk*{\E \brk[s]*{\frac{\indh{x}{y}(h)}{\yprobg{\hat P}{x}{y}}} - \E \brk[s]*{\frac{\indh{x}{y}(h)}{\yprobg{P_\star}{x}{y}}}}^2,
            \end{align*}
            and rearranging we obtain that for all $h \in \calH$,
            \begin{align*}
                \E \brk[s]*{\frac{\indh{x}{y}(h)}{\yprobg{\hat P}{x}{y}}}
                &\leq
                \sqrt{\frac{2 \mu K^2}{\gamma^2}} + \E \brk[s]*{\frac{\indh{x}{y}(h)}{\yprobg{P_\star}{x}{y}}},
            \end{align*}
            so using \cref{lemma:first-order-guarantee} and the form of $\nabla \obj$ we obtain the desired bound.
        \end{proof}

    \begin{proof}[Proof of \cref{lem:phase-1-fw}]
        In order to invoke \cref{thm:fw-generic} on $\obj(\cdot)$, we prove that $\objrand(\cdot; x,y)$ satisfies the required Lipschitz and smoothness properties with $G = K / \gamma$ and $\beta = K^2 / \gamma^2$. Indeed, the gradient of $\objrand$ is given by
        \begin{align*}
            \nabla \objrand(P,x,y) = -(1-\gamma) \frac{\indh{x}{y}}{\yprobg{P}{x}{y}},
        \end{align*}
        and since $\yprobg{P}{x}{y} \geq \gamma / K$ and $\norm{\indh{x}{y}}_\infty \leq 1$ we obtain the required Lipschitz property, that is, $\norm{\nabla \objrand(P,x,y)}_\infty \leq G$. For $L_1$ smoothness, it suffices to show that for any $P,Q \in \Delta_N$ and any $(x,y) \in \calX \times \calY$ it holds that
        \begin{align*}
            \norm{\nabla \objrand(P,x,y) - \nabla \objrand(Q,x,y)}_\infty \leq \beta \norm{P - Q}_1.
        \end{align*}
        Indeed,
        \begin{align*}
            \norm{\nabla \objrand(P,x,y) - \nabla \objrand(Q,x,y)}_\infty
            &=
            (1-\gamma) \max_{h \in \calH} \left| \frac{\indh{x}{y}(h)}{\yprobg{P}{x}{y}} - \frac{\indh{x}{y}(h)}{\yprobg{Q}{x}{y}} \right| \\
            &\leq
            \left| \frac{\yprobg{Q}{x}{y} - \yprobg{P}{x}{y}}{\yprobg{P}{x}{y}\yprobg{Q}{x}{y}} \right| \\
            &=
            (1-\gamma) \left| \frac{\indh{x}{y} \cdot \brk*{Q - P}}{\yprobg{P}{x}{y}\yprobg{Q}{x}{y}}\right| \\
            &\leq
            \frac{K^2}{\gamma^2} \norm{P - Q}_1,
        \end{align*}
        where in the last step we used H\"older's inequality. Now, using \cref{thm:fw-generic}, our choice of $T$ and $M_1$ for which the batch sizes $b_t$ defined in \cref{thm:fw-generic} satisfy $\sum_{t=1}^T b_t \leq M_1$, the proof is complete.
    \end{proof}        

    \begin{lemma}
    \label{lem:log-bar-inequality}
    Let $\omega(z) = -\log{z} + z-1$.  It holds that:
    $$
    \omega(z) 
    \geq
    \min\set*{ \tfrac{1}{2}(1-z)^2 , \tfrac z2 \br[\big]{1-\tfrac{1}{z}}^2 }
    .
    $$
    The first lower bound is relevant for $z \leq 1$ and the second for $z \geq 1$.
    \end{lemma}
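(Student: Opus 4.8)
The plan is to prove the two stated bounds in the two regimes separately, after first observing that in each regime we only need the \emph{weaker} of the two candidate lower bounds. Concretely, for $z\le 1$ we have $1/z\ge 1$, hence $\tfrac z2(1-\tfrac1z)^2=\tfrac{(z-1)^2}{2z}\ge\tfrac{(z-1)^2}{2}=\tfrac12(1-z)^2$, so the minimum equals $\tfrac12(1-z)^2$; symmetrically, for $z\ge 1$ the minimum equals $\tfrac{(z-1)^2}{2z}$. Thus it suffices to establish $\omega(z)\ge\tfrac12(1-z)^2$ on $(0,1]$ and $\omega(z)\ge\tfrac{(z-1)^2}{2z}$ on $[1,\infty)$. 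In both cases I would form the difference of the two sides, note that it vanishes at $z=1$, and control its sign via a one-line derivative computation.

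For the range $z\le 1$, set $f(z)=\omega(z)-\tfrac12(1-z)^2=-\log z+z-1-\tfrac12(1-z)^2$. Then $f(1)=0$, and differentiating gives $f'(z)=\brk*{-\tfrac1z+1}-(z-1)=2-z-\tfrac1z=-\tfrac{(z-1)^2}{z}\le 0$ for every $z>0$. Hence $f$ is nonincreasing on $(0,\infty)$, so $f(z)\ge f(1)=0$ for all $z\in(0,1]$, which is the first bound.

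For the range $z\ge 1$, set $g(z)=\omega(z)-\tfrac{(z-1)^2}{2z}$. Writing $\tfrac{(z-1)^2}{2z}=\tfrac z2-1+\tfrac1{2z}$ makes its derivative $\tfrac{z^2-1}{2z^2}$ transparent, so $g'(z)=\brk*{-\tfrac1z+1}-\tfrac{z^2-1}{2z^2}=\tfrac12-\tfrac1z+\tfrac1{2z^2}=\tfrac{(z-1)^2}{2z^2}\ge 0$. Since $g(1)=0$ and $g$ is nondecreasing on $[1,\infty)$, we get $g(z)\ge 0$ there, which is the second bound. Combining the two regimes yields the claimed inequality.

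I do not anticipate a genuine obstacle: the proof is elementary calculus, and the only things to get right are the initial bookkeeping identifying which term of the minimum is active in each regime (so that one proves the easier pointwise inequality rather than the sharper one) and the fact that both derivative expressions collapse to $\pm(z-1)^2$ over a positive quantity. It is perhaps worth remarking that these perfect-square lower bounds are precisely the quantitative expression of the self-concordance of the log-barrier potential $\objrand$ that this lemma is invoked to supply in the proof of \cref{lem:expected-log-subopt}.
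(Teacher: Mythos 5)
Your proof is correct and is essentially the paper's own argument recast in differential rather than integral form: the paper writes $\omega(z)=\int_1^z(1-1/x)\,dx$ and integrates the pointwise bounds $1/x-1\ge x-1$ (for $x\le 1$) and $1-1/x\ge\tfrac12(1-1/x^2)$ (for $x\ge 1$), which is exactly your observation that the difference functions $f$ and $g$ have sign-definite derivatives and vanish at $z=1$. Your extra bookkeeping identifying which term of the minimum is active in each regime matches the lemma's closing remark and introduces no gap.
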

    
    \begin{proof}
    Note that $\omega'(z)=1-1/z$ and $\omega(1)=0$, hence $\omega(z) = \int_{1}^{z} (1-1/x) dx$.
    When $0 < z \leq 1$ we can bound $1-1/x \leq 1-x$ for any $z \leq x \leq 1$. Therefore,
    
    $$
    \omega(z)
    =
    \int_{1}^{z} (1/x-1) dx
    \geq
    \int_{1}^{z} (x-1) dx
    =
    \tfrac12 (1-z)^2
    .
    $$
    When $z \geq 1$, we can use the elementary inequality, for any $z \geq x \geq 1$:
    
    $$
    1-\frac{1}{x} 
    \geq 
    \frac12\br*{1-\frac{1}{x^2}}
    =
    \frac{d}{dx}\cbr*{ \frac{x}{2}\br*{1-\frac1x}^2 }
    $$
    to bound
    
    $$
    \omega(z)
    =
    \int_{1}^{z} (1-1/x) dx
    \geq
    \int_{1}^{z} \br*{ \tfrac12 (1-1/x^2) } dx
    =
    \tfrac{z}{2} \br{1-1/z}^2
    .
    $$    
    \end{proof}

\section[Proof of Natarajan Extension]{Proof of \cref{prop:natarajan}}
\label{sec:app-natarajan}

\begin{proof}
Consider the following learning rule: first sample $s$ examples $x_1,\ldots, x_s$ from the population (guess their label arbitrarily). Now for each pattern $\{(x_i,y_i)\}_{i=1}^s$ which is realizable by some function $h\in \mathcal{H}$,
pick a representative $h\in \mathcal{H}$ such that $h(x_i)=y_i$ for all $i\leq s$.
Define $\mathcal{H}_{\mathtt{fin}}$ to be the set of all such representatives.
The Sauer-Shelah-Perles Lemma for the Natarajan dimension~\citep{HausslerL95} implies that 
\[\lvert\mathcal{H}_{\mathtt{fin}}\rvert \leq S.\]
Now, apply the assumed algorithm on the finite class $\mathcal{H}_{\mathtt{fin}}$
with error and confidence parameters $\eps/2,\delta/2$.
Let $\hat h$ denote the hypothesis outputted by the algorithm. 
Thus, with probability at least $\delta/2$ the excess loss (or regret) of $\hat h$ with respect to $\mathcal{H}_{\mathtt{fin}}$ is at most $\eps/2$. A union bound combined with the following lemma imply that with probability at least $\delta$, the excess loss of $\hat h$ with respect to $\mathcal{H}$ is at most $\eps$.
\begin{lemma} \label{lem:cover}
With probability at least $1-\delta/2$ over the sampling of $x_1,\ldots x_s$, the finite class $\mathcal{H}_{\mathtt{fin}}$ is an $(\eps/2)$-cover for $\mathcal{H}$. That is, for every $h\in \mathcal{H}$ there exists $h'\in\mathcal{H}_\mathtt{fin}$
such that the probability that $h'(x)\neq h(x)$ for a random point $x$ drawn from the population is at most $\eps/2$.
\end{lemma}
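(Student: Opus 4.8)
The plan is to reduce the covering property to the classical $\eps$-net guarantee for an auxiliary \emph{binary} concept class --- the class of disagreement regions of pairs of hypotheses in $\calH$. Define
\[
\mathcal{G} \eqdef \brk[c]*{\, x \mapsto \ind{h(x) \neq h'(x)} \;:\; h,h' \in \calH \,},
\]
a class of $\{0,1\}$-valued functions on $\calX$. First I would record the (easy) reduction: if the sample $x_1,\ldots,x_s$ forms an $(\eps/2)$-net for $\mathcal{G}$ --- i.e., every $g \in \mathcal{G}$ with $\Pr_x[g(x)=1] \ge \eps/2$ satisfies $g(x_i)=1$ for some $i$ --- then $\mathcal{H}_{\mathtt{fin}}$ is an $(\eps/2)$-cover of $\calH$. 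Indeed, for any $h \in \calH$ the pattern $(h(x_1),\ldots,h(x_s))$ is realizable (by $h$ itself), so by construction $\mathcal{H}_{\mathtt{fin}}$ contains a representative $h'$ with $h'(x_i)=h(x_i)$ for all $i \le s$. The disagreement region $\{x : h(x)\ne h'(x)\}$ then contains no sample point, hence by the $\eps$-net property it has probability at most $\eps/2$; that is, $h'$ is $(\eps/2)$-close to $h$.

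It thus remains to show that a sample of size $s$ as in the statement is an $(\eps/2)$-net for $\mathcal{G}$ with probability at least $1-\delta/2$, for which I would bound $\mathrm{VCdim}(\mathcal{G})$ and invoke the $\eps$-net theorem. To bound the VC dimension, observe that any labeling of $m$ points $x_1,\ldots,x_m$ by a function $g \in \mathcal{G}$ is determined by the pair of behaviors $(h(x_1),\ldots,h(x_m))$, $(h'(x_1),\ldots,h'(x_m))$ of the two hypotheses realizing $g$; hence $\mathcal{G}$ realizes at most the square of the number of behaviors of $\calH$ on $m$ points. By the Sauer--Shelah--Perles lemma for the Natarajan dimension~\citep{HausslerL95}, the number of behaviors of $\calH$ on $m$ points is at most $\sum_{i=0}^{d_N}\binom{m}{i}\binom{K}{2}^i \le (em/d_N)^{d_N}K^{2d_N}$, so $\mathcal{G}$ realizes at most $(em/d_N)^{2d_N}K^{4d_N}$ labelings of $m$ points. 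If $\mathcal{G}$ shatters such a set then $2^m \le (em/d_N)^{2d_N}K^{4d_N}$, and inverting this inequality in the standard way (using that it is $m/d_N$, not $m$, inside the logarithm) gives $\mathrm{VCdim}(\mathcal{G}) = O(d_N\log K)$. Plugging this into the $\eps$-net theorem with accuracy $\eps/2$ and confidence $\delta/2$ shows that a sample of size $s = O\!\brk*{\tfrac1\eps\brk*{\mathrm{VCdim}(\mathcal{G})\log\tfrac1\eps + \log\tfrac1\delta}} = O\!\brk*{\tfrac{d_N\log(K)\log(1/\eps)+\log(1/\delta)}{\eps}}$ is an $(\eps/2)$-net for $\mathcal{G}$ with probability at least $1-\delta/2$, matching the value of $s$ in \cref{prop:natarajan} and, together with the reduction above, proving the lemma.

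The main obstacle is the VC-dimension computation for the symmetric-difference class $\mathcal{G}$: unlike the binary case, where $\mathrm{VCdim}(\calH \triangle \calH)=O(\mathrm{VCdim}(\calH))$ follows from a clean Sauer--Shelah argument, here one must carry the extra $\binom{K}{2}^i$ factors through the Natarajan version of Sauer--Shelah, square the resulting bound to account for the pair $(h,h')$, and verify that solving $2^m \le (em/d_N)^{2d_N}K^{4d_N}$ still yields only an $O(d_N\log K)$ bound (rather than, say, $\poly(K)$) --- this is the ``more nuanced'' step alluded to in the overview. A secondary point to be careful about is that the $\eps$-net theorem must be applied to the disagreement \emph{regions} $\{h\ne h'\}$ themselves, not their complements, which is precisely the quantification needed for the covering claim.
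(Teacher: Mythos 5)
Your proof is correct and follows essentially the same route as the paper: both pass to the binary disagreement (symmetric-difference) class, bound its capacity on $m$ points by squaring the Natarajan Sauer--Shelah--Perles bound for $\calH$, and conclude via the classical VC double-sample machinery. The only difference is packaging --- you convert the growth bound into an explicit VC-dimension bound $O(d_N\log K)$ for the disagreement class and invoke the off-the-shelf $\eps$-net theorem, whereas the paper applies the double-sample symmetrization argument directly with the growth function evaluated at $2s$, avoiding the inversion step; both give the same value of $s$.
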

Indeed, by a union bound, with probability at least $1-\delta$, both (i) the excess loss of $\hat h$ is at most $\epsilon/2$, and (ii) $\mathcal{H}_\mathtt{fin}$ is an $(\eps/2)$-cover for $\mathcal{H}$. Hence, the excess loss of $\hat h$ with respect to the best hypothesis in $\mathcal{H}$ is at most $\eps/2 + \eps/2=\eps$, as required.
\end{proof}

\cref{lem:cover} follows from a standard uniform convergence argument.
In the case of binary labels $K=2$, this lemma is known and has been used e.g.\ in~\cite{ABM19}. The general case is derived below using a similar argument like in~\cite{ABM19}.
\begin{proof}[Proof of \cref{lem:cover}]
We need to show that with probability at least $1-\delta/2$ over $(x_1,y_1),\ldots, (x_s,y_s) \sim \calD$, for every $h\in\mathcal{H}$, there exists $h'\in \mathcal{H}_{\mathtt{fin}}$ such that $\Pr_{(x,y)\sim\calD}[h(x)\neq h'(x)]\leq \eps/2$. For convenience, we use the notation $d(h,h'):=\Pr_{(x,y)\sim\calD}[h(x)\neq h'(x)]$.

Let $T=(x_1, \ldots, x_s)$ be the sequence of points in the random sample, and for a hypothesis $h$ let $h(T)=\left(h(x_1), \ldots, h(x_s)\right)$. By construction, for every $h\in\calH$, there exists $h'\in\calH_{\mathtt{fin}}$ such that $h'(T)=h(T)$. We will show that $d(h,h')\leq \eps/2$.

Define the event 
\begin{align}
E=\bigl\{\exists h_1, h_2 \in \calH:~ d(h_1, h_2)>\eps/2 ~\text{ and } h_1(T)=h_2(T)\bigr\}.\nonumber  
\end{align}
We prove that 
\begin{align}
    \Pr[E]&\leq 2\left(\frac{2e\,s}{d_N}\right)^{2d_N}K^{4d_N}\,e^{-\epsilon\,s/4}.\label{ineq:pr_bad_event}
\end{align}
Before we do so, we first show that this suffices to prove the lemma: indeed, if 
$d(h,h')> \epsilon/2$ for some $h\in\calH$ and $h'\in\calH_{\mathtt{fin}}$
such that $h'(T)=h(T)$, then the event $E$ occurs because $\mathcal{H}_\mathtt{fin}\subseteq\calH$. Now, via standard manipulation, this bound is at most $\delta/2$ for some
\[s=O\Bigl(\frac{d_N\ln(K)\ln(1/\epsilon) + \ln(1/\delta)}{\epsilon}\Bigr),\] which yields the desired bound and completes the proof.

It remains to prove (\ref{ineq:pr_bad_event}). To do so, we use a standard VC-based uniform convergence bound on the class
$\calH_\Delta = \{\Delta_{h_1,h_2} : h_1,h_2\in \calH\}$, where
\[
\Delta_{h_1,h_2}(x) = \begin{cases}
1 &h_1(x)\neq h_2(x),\\
0 &h_1(x) = h_2(x).
\end{cases}
\]
Let $\mathcal{G}_{\Delta}$ denote the growth function of $\calH_\Delta$; that is, for any number $m,$ 
\[\mathcal{G}_{\Delta}(m)= \max_{\{V: \lvert V\rvert=m\}}\Bigl\lvert\calH_{\Delta}\vert_V\Bigr\rvert,\] 
where $\calH_{\Delta}\vert_V$ is the set of all restrictions (or projections) on $V$ of functions from $\calH_\Delta$. Note that 
\[\mathcal{G}_{\Delta}(m)\leq \left(\frac{2e\,s}{d_N}\right)^{2d_N}K^{4d_N}.\]
This follows from the fact that for any set $V$ of size $m$, we have $\lvert\calH_{\Delta}\vert_V\rvert\leq \lvert \calH\vert_V\rvert^2$, since every binary vector in $\calH_{\Delta}\vert_V$ is determined by a pair of functions in $\calH\vert_V$. Hence,  
\[\mathcal{G}_{\Delta}(m) \leq \max_{\lvert V\rvert=m}\bigl\lvert \calH\vert_V \bigr\rvert^2 \leq \left(\frac{2e\,s}{d_N}\right)^{2d_N}K^{4d_N},  \]  
where the last inequality follows from the extended Sauer's Lemma for Natarajan classes applied on~$\calH$~\citep{HausslerL95}. 

Now, by invoking a uniform convergence argument, we have
\begin{align*}
\Pr[E]&= \Pr[\exists h_1, h_2 \in \calH:~ d(h_1, h_2)>\eps/2 ~\text{ and } h_1(T)=h_2(T)]\\
&=\Pr[\exists b\in \calH_{\Delta}: ~ d(b,b_{0}) \geq\eps/2 \text{ and } b(T) = b_0(T)] \tag{Here $b_0$ is the all-zero vector} \\
 &\leq ~2\mathcal{G}_{\Delta}(2\,s)\,e^{-\eps,s/4} \tag{double-sample symmetrization argument}\\
 &\leq2\left(\frac{2e\,s}{d_N}\right)^{2d_N}K^{4d_N}\,e^{-\epsilon\,s/4}.
\end{align*}
The bound in the third line is non-trivial; it is known as the double-sample argument which was used by Vapnik and Chervonenkis in their seminal paper~\citep{Vapnik68}.
The same argument is used in virtually all VC-based uniform convergence bounds. This proves inequality~(\ref{ineq:pr_bad_event}) and completes the proof of the lemma.  
\end{proof}

\section{Stochastic (SPIDER) Frank-Wolfe in L1 Norm}
\label{sec:frank-wolfe}


In this section we present and analyze the stochastic Frank-Wolfe (FW) method used as a sub-procedure in our main algorithm. Crucially, we require a specific variant of stochastic FW due to \citet{yurtsever19b}, called SPIDER-FW, that employs mini-batching and variance-reduced for stochastic gradient estimation.  

SPIDER FW has been shown to obtain the optimal $1/\eps^2$ rate of convergence in a general stochastic setting that involves a convex and smooth objective~\citep{yurtsever19b}.  However, the existing analysis pertain to the Euclidean case, whereas we crucially require an $L_1$--$L_\infty$ analysis.  We provide such analysis here with a specialized argument for variance-reduction with respect to the $L_1$ norm. (We note that similar arguments have appeared in \cite{asi2021private} in the context of differentially-private optimization.)

\paragraph{Setup.}
The setup for this section is the following.
We consider a stochastic optimization problem of the form
\begin{align*}
    \text{\parbox{12ex}{minimize}} & F(w) = \E[f(w,z)]
    \\
    \text{\parbox{12ex}{s.t.}} & w \in W
    ,
\end{align*}
where $W$ is a convex domain in $\R^d$ and the expectation is over $z$ drawn from an underlying distribution $\calD$.
In the context of this section, we think of $\calD$ as being unknown but assume that i.i.d.~samples $z_1,z_2,\ldots \sim \calD$ are readily available.

We further make the following assumptions:
\begin{itemize}[leftmargin=!]
    \item 
    We assume that $f$ convex, $\beta$-smooth and $G$-Lipschitz (in its first argument) with respect to the $L_1$ norm over $W$, that is,
    \begin{align*}
        f(u,z) \leq f(v,z) + \nabla f(v,z) \cdot \brk*{u-v} + \frac{\beta}{2} \norm{u-v}_1^2 \quad \forall z \in \Z, \forall u,v \in W,
    \end{align*}
    and further, that 
    $
        \norm_{\nabla f(v,z)}_\infty \leq G
    $
    for all $z \in Z$ and $v \in \W$;
    
    \item 
    We assume that the feasible domain $W$ has $L_1$ diameter $\leq D$, that is,  
    $
        \norm{u - v}_1 \leq D
    $
    for all $u,v \in W$.
    %
    the algorithmic access to the set $W$ is through a linear optimization oracle ($\loo$), that given any vector $g \in \R^d$ computes
    $$
    \loo(g) = \argmin_{w \in W} g \dotp w .
    $$    
\end{itemize}

\subsection{Stochastic FW with SPIDER gradient estimates}

The SPIDER FW algorithm is presented in \cref{alg:fw}.  The algorithm makes Frank-Wolfe type updates using ``SPIDER'' gradient estimates $g_t$, that are computed from the raw stochastic gradients. The SPIDER estimates work in mini-batches, but employ low-variance bias correction in order to update the estimates without resetting them entirely from round to round, thus saving in the mini-batch sizes.

\begin{algorithm}[ht]
    \caption{Stochastic Frank-Wolfe with SPIDER gradient estimates}
    \label{alg:fw}
    \begin{algorithmic}
        \STATE{Parameters: step sizes $\brk[c]*{\eta_t}_{t}$, batch sizes $\set{b_t}_t$.}
        \STATE{Initialize $w_1 \in \W$ and initial gradient estimate $g_1 = 0$.}
        \STATE{Let $\tau_k=2^k-1$ for $k=1,2,\ldots$.}
        \FOR{$t = 1,2,\ldots$}
            \STATE{Draw fresh $b_t$ samples $z_1,\ldots,z_{b_t}$ from $\calD$;}
            \IF{$t \in \brk[c]*{\tau_k}_{k \geq 1}$}
            \STATE{Set}
            \[
                g_t = \frac{1}{b_t} \sum_{i=1}^{b_t} \nabla f(w_t,z_i) ;
            \]
            \ELSE
            \STATE{Set}
            \[
                g_t = g_{t-1} + \frac{1}{b_t} \sum_{i=1}^{b_t} \br{ \nabla f(w_t,z_i) - \nabla f(w_{t-1},z_i) };
            \]
            \ENDIF
            \STATE{Compute $v_t = \loo(g_t)$;}
            \STATE{Update $w_{t+1} = (1-\eta_t)w_t + \eta_t v_t$;}
        \ENDFOR
    \end{algorithmic}
\end{algorithm}

The main result of this section is given in the following theorem.

\begin{theorem} \label{thm:fw-generic}
\cref{alg:fw} with step sizes $\eta_t = 1/t$ and batch sizes $b_t$ defined as
\begin{align} \label{eq:batchsizes}
    b_t = \begin{cases}
    (G/\beta D)^2 \, t^2 & \text{if $t \in \set{\tau_k}_{k \geq 1}$};
    \\
    t & \text{otherwise,}
    \end{cases}
\end{align}
guarantees that, for all $t \geq 2$ and any $\delta>0$:
$$
    \E\sbr{ F(w_{t}) - F(w^*) }
    \leq
    \frac{25\beta D^2}{t} \sqrt{\log{d}}
    ~,
    \quad\text{and}\quad
    \Pr\br*{ F(w_{t}) - F(w^*) \leq \frac{25\beta D^2}{t} \sqrt{\log\frac{dt}{\delta}} } \geq 1-\delta
    ~.
$$
Consequently, for convergence to within $\epsilon > 0$ with probability at least $1-\delta$, the algorithm requires $O((\beta D^2/\epsilon) \sqrt{\log(d/\delta)})$ calls to LOO and $O(((G^2D^2 + \beta^2 D^4)/ \epsilon^2) \log(d/\delta))$ stochastic gradient computations.
\end{theorem}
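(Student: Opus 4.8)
I would combine the classical Frank--Wolfe convergence argument --- carried out with respect to $L_1$-smoothness rather than the usual Euclidean smoothness --- with a dedicated analysis of the SPIDER variance-reduced gradient estimates in the $L_1/L_\infty$ geometry. Throughout, write $h_t := F(w_t) - F(w^*)$ for the optimality gap and $e_t := \norm{g_t - \nabla F(w_t)}_\infty$ for the gradient-estimation error at step $t$. The plan is to reduce both claimed bounds to an estimate of the form $e_t \lesssim \beta D \sqrt{\log(dt/\delta)}/t$, and to get the complexity claims by bookkeeping $\sum_{s\le t} b_s$.

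\textbf{The Frank--Wolfe recursion.} First I would establish a one-step inequality. Starting from the $L_1$-smoothness descent bound $F(w_{t+1}) \le F(w_t) + \eta_t \langle \nabla F(w_t), v_t - w_t\rangle + \tfrac{\beta}{2}\eta_t^2 \norm{v_t - w_t}_1^2$, I bound $\norm{v_t-w_t}_1 \le D$, split $\nabla F(w_t) = g_t + (\nabla F(w_t)-g_t)$, and use: (a) $v_t = \loo(g_t)$ minimizes $\langle g_t,\cdot\rangle$ over $\W$, so $\langle g_t, v_t - w_t\rangle \le \langle g_t, w^* - w_t\rangle$; (b) convexity of $F$, giving $\langle \nabla F(w_t), w^* - w_t\rangle \le -h_t$; (c) H\"older's inequality, $|\langle \nabla F(w_t) - g_t, u\rangle| \le e_t \norm{u}_1 \le D e_t$. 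This yields $h_{t+1} \le (1-\eta_t) h_t + 2 D \eta_t e_t + \tfrac{\beta D^2}{2}\eta_t^2$. With $\eta_t = 1/t$ one has $\prod_{r=s+1}^t (1-1/r) = s/t$, so telescoping (equivalently, tracking the potential $(t-1)h_t$) gives $h_{t+1} \le \tfrac1t\bigl( h_2 + 2D\sum_{s=2}^t e_s + \tfrac{\beta D^2}{2}\sum_{s=2}^t \tfrac1s\bigr)$, where $h_2 \le GD$ since $F$ is $G$-Lipschitz with respect to $\norm{\cdot}_1$. Everything now reduces to controlling $\sum_{s\le t} e_s$.

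\textbf{The SPIDER error (the crux).} Fix $t$ and let $\tau_k = 2^k-1$ be the start of the epoch containing $t$, so $\tau_k \ge t/2$. Unfolding the SPIDER update gives $g_t - \nabla F(w_t) = \bigl(g_{\tau_k} - \nabla F(w_{\tau_k})\bigr) + \sum_{s=\tau_k+1}^{t} \xi_s$, where $\xi_s := \tfrac1{b_s}\sum_{i=1}^{b_s}\bigl(\nabla f(w_s, z_i) - \nabla f(w_{s-1},z_i)\bigr) - \bigl(\nabla F(w_s) - \nabla F(w_{s-1})\bigr)$ is a martingale-difference sequence in the natural filtration. I would estimate the two pieces separately. (i) \emph{Restart term:} $g_{\tau_k} - \nabla F(w_{\tau_k})$ is the average of $b_{\tau_k} = (G/\beta D)^2 \tau_k^2$ i.i.d.\ mean-zero vectors of $L_\infty$-norm $\le 2G$, so a coordinatewise Bernstein/Hoeffding bound plus a union bound over the $d$ coordinates gives $\norm{g_{\tau_k} - \nabla F(w_{\tau_k})}_\infty \lesssim G\sqrt{\log(d/\delta)/b_{\tau_k}} = \beta D \sqrt{\log(d/\delta)}/\tau_k$. (ii) \emph{Correction terms:} $L_1$-smoothness gives $\norm{\nabla f(u,z) - \nabla f(v,z)}_\infty \le \beta\norm{u-v}_1$, and $\norm{w_s - w_{s-1}}_1 = \eta_{s-1}\norm{v_{s-1}-w_{s-1}}_1 \le D/(s-1)$, so each summand of $\xi_s$ has $L_\infty$-norm $\le 2\beta D/(s-1)$ and $\xi_s$ has per-coordinate conditional variance $\lesssim (\beta D)^2/\bigl((s-1)^2 b_s\bigr) = (\beta D)^2/\bigl((s-1)^2 s\bigr)$, which sums over the epoch to $\lesssim (\beta D)^2/\tau_k^2$. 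The delicate point specific to the $L_1/L_\infty$ setting is that $\E\norm{\sum_s \xi_s}_\infty^2 \ne \sum_s \E\norm{\xi_s}_\infty^2$, so I cannot mimic the Euclidean SPIDER analysis directly; instead I would control $\norm{\sum_s \xi_s}_\infty$ either by a Freedman-type martingale inequality applied coordinatewise followed by a union bound over the $d$ coordinates (for the high-probability statement), or --- for the in-expectation statement --- by propagating the potential $\tfrac12\norm{\sum_{r\le s}\xi_r}_p^2$ with $p = \Theta(\log d)$, using that $\tfrac12\norm{\cdot}_p^2$ is $(p-1)$-smooth with respect to $\norm{\cdot}_p$ so that the linear (martingale) term drops in conditional expectation and $\E\norm{\cdot}_p^2$ accumulates only the increment variances (in the spirit of \cite{asi2021private}). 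Either way one obtains, simultaneously for all $t$ with probability $\ge 1-\delta$, $e_t \lesssim \beta D\sqrt{\log(dt/\delta)}/\tau_k \lesssim \beta D\sqrt{\log(dt/\delta)}/t$, and in expectation $\E e_t \lesssim \beta D\sqrt{\log d}/t$.

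\textbf{Conclusion and complexity.} Substituting these estimates back into the telescoped recursion --- summing $e_s \lesssim \beta D\sqrt{\log(dt/\delta)}/s$ over the $O(\log t)$ geometric epochs, each contributing $O(\beta D\sqrt{\log(dt/\delta)})$ to $\sum_{s\le t}e_s$, and using $h_2 \le GD$ and $\sum_{s\le t}1/s = O(\log t)$ --- yields $h_t = O(\beta D^2\sqrt{\log(dt/\delta)}/t)$ with probability $\ge 1-\delta$ and $\E[h_t] = O(\beta D^2\sqrt{\log d}/t)$, which are the two displayed bounds. For the computational claims: reaching $h_t \le \epsilon$ takes $t = O(\beta D^2 \sqrt{\log(d/\delta)}/\epsilon)$ iterations, which is exactly the number of $\loo$ calls; the number of stochastic gradient evaluations is $\sum_{s\le t} b_s$, where the restart rounds form a geometric sum $\asymp (G/\beta D)^2 t^2 = G^2 D^2\log(d/\delta)/\epsilon^2$ and the remaining rounds contribute $\sum_{s\le t} s \asymp t^2/2 = \beta^2 D^4\log(d/\delta)/\epsilon^2$, giving the stated $O\bigl((G^2D^2 + \beta^2 D^4)\log(d/\delta)/\epsilon^2\bigr)$ total. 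The main obstacle is step (ii): standard SPIDER analyses rest on the Euclidean orthogonality identity $\E\norm{\sum\xi_s}_2^2 = \sum\E\norm{\xi_s}_2^2$, and the whole argument hinges on replacing it with an $L_\infty$-valid substitute that loses only a $\sqrt{\log d}$-type factor --- precisely what the $\ell_p$-potential with $p = \Theta(\log d)$ (or the coordinatewise Freedman bound) is designed to provide.
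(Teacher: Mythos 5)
Your decomposition matches the paper's quite closely: the one-step Frank--Wolfe inequality in the $L_1$/$L_\infty$ geometry is exactly the one used in \cref{lem:genericFW}, and your treatment of the SPIDER error is essentially the paper's \cref{lem:spider} --- restart rounds handled by mini-batch averaging of $L_\infty$-bounded gradients, correction rounds handled via $L_1$-smoothness and $\norm{w_s-w_{s-1}}_1\le D\eta_{s-1}$, per-coordinate martingale concentration, and a union bound over the $d$ coordinates producing the $\sqrt{\log d}$ (the paper phrases this as a max of $2d$ sub-Gaussians with parameter $6\beta D/t$ obtained by summing squared sub-Gaussian parameters along the epoch; your coordinatewise Freedman bound, or the $\ell_p$-potential with $p=\Theta(\log d)$, is an acceptable substitute for the same step). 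The complexity bookkeeping at the end also matches.

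The gap is in how you solve the recursion. Your unrolled bound is $h_{t+1}\le\tfrac1t\bigl(h_2+2D\sum_{s\le t}e_s+\tfrac{\beta D^2}{2}\sum_{s\le t}\tfrac1s\bigr)$, and by your own accounting $\sum_{s\le t}e_s=\Theta\bigl(\beta D\sqrt{\log(dt/\delta)}\,\log t\bigr)$ (there are $\Theta(\log t)$ epochs, each contributing $\Theta(\beta D\sqrt{\cdot})$) while $\sum_{s\le t}\tfrac1s=\Theta(\log t)$; so what follows from this route is $h_t=O\bigl(\beta D^2\sqrt{\log(dt/\delta)}\,\log t/t\bigr)$, and the final ``yields $O(\beta D^2\sqrt{\log(dt/\delta)}/t)$'' silently drops a $\log t$ factor --- the in-expectation bound $\tfrac{25\beta D^2}{t}\sqrt{\log d}$, which has no $t$-dependence in the logarithm at all, is therefore not established by the argument as written. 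The paper avoids the harmonic sum by never unrolling: \cref{lem:genericFW} takes as input the \emph{per-step} estimate $e_t\le c\eta_t$ (exactly what \cref{lem:spider} delivers with $c\approx 12\beta D\sqrt{\log d}$), so the error term enters the one-step inequality as $cD\eta_t^2$ alongside the smoothness term, and the $\tfrac{\beta D^2+2cD}{t}$ bound is then closed by induction on $t$ rather than by summation. If you repair your argument you should cast it in that form (and verify the induction step carefully, since with $\eta_t=1/t$ the passage from a $C/t$ hypothesis to a $C/(t+1)$ conclusion is delicate). A second, smaller discrepancy: bounding $h_2\le GD$ injects a $GD/t$ term that does not appear in the theorem (and $G$ need not be dominated by $\beta D$ in general); instead, apply the one-step inequality at $t=1$ with $\eta_1=1$, which gives $h_2\le De_1+\tfrac{\beta D^2}{2}$ and stays on the $\beta D$ scale because $b_1=(G/\beta D)^2$ already converts the $G$-scale restart noise into $\beta D$-scale error.
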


\subsection{Analysis}

To analyze \cref{alg:fw}, we first record a generic convergence guarantee for stochastic FW template with gradient estimates.  The template operates as follows; starting from an arbitrary initialization $w_1 \in W$, for $t=1,...,T$:
\begin{enumerate}[nosep,label=(\roman*)]
    \item get gradient estimator $g_t$ at $w_t$;
    \item use $\loo$ to compute $v_t = \argmin_{w \in W} g_t \dotp w$; 
    \item update $w_{t+1} = (1-\eta_t) w_t + \eta_t v_t$. 
\end{enumerate}


\begin{lemma} \label{lem:genericFW}
Set $\eta_t = 1/t$ for all $t$ and suppose that the gradient estimates satisfy, for some $c > 0$,%
\footnote{The claim of this particular theorem holds in fact for any pair of dual norms, but for consistency, is stated and proved here only for the $L_1$-$L_\infty$ case.}
\begin{align*}
    \forall ~ t \geq 1:
    \qquad
    \E\norm{g_t-\nabla F(w_t)}_\infty \leq c \eta_t
    .
\end{align*}
Then the FW iterations guarantees for all $t \geq 2$ and $w^* \in W$ that:
\begin{align*}
    \E\sbr{ F(w_{t}) - F(w^*) }
    \leq 
    \frac{\beta D^2 + 2c D}{t}
    .    
\end{align*}
Similarly, if the estimates satisfy
\begin{align*}
    \forall ~ t \geq 1:
    \qquad
    \Pr\br{ \norm{g_t-\nabla F(w_t)}_\infty > c_\delta \eta_t } \leq \delta
    .
\end{align*}
for some $\delta \in (0,1)$ and $c_\delta>0$, then for any $t \geq 2$ and $w^* \in W$, with probability at least $1-t\delta$,
\begin{align*}
    F(w_{t}) - F(w^*)
    \leq 
    \frac{\beta D^2 + 2c_\delta D}{t}
    .    
\end{align*}
\end{lemma}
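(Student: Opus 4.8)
The goal is to prove the generic Frank-Wolfe convergence lemma (\cref{lem:genericFW}), assuming gradient estimates whose $L_\infty$ error decays like $c\eta_t = c/t$ in expectation (or with high probability). The plan is to run the standard Frank-Wolfe descent argument, but carefully tracking the error term that arises from using $g_t$ in place of the true gradient $\nabla F(w_t)$, and bounding it using $L_1$--$L_\infty$ duality rather than Euclidean inner products.

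\textbf{Step 1: one-step progress inequality.} I would start from $\beta$-smoothness of $F$ with respect to the $L_1$ norm: since $w_{t+1} - w_t = \eta_t(v_t - w_t)$ and $\norm{v_t - w_t}_1 \le D$, we get
\[
F(w_{t+1}) \le F(w_t) + \eta_t \nabla F(w_t)\cdot(v_t - w_t) + \tfrac{\beta}{2}\eta_t^2 D^2.
\]
Now write $\nabla F(w_t)\cdot(v_t - w_t) = g_t\cdot(v_t-w_t) + (\nabla F(w_t) - g_t)\cdot(v_t - w_t)$. By the choice $v_t = \loo(g_t) = \argmin_{w} g_t\cdot w$, we have $g_t\cdot v_t \le g_t\cdot w^*$, hence $g_t\cdot(v_t - w_t) \le g_t\cdot(w^* - w_t) = \nabla F(w_t)\cdot(w^* - w_t) + (g_t - \nabla F(w_t))\cdot(w^* - w_t)$. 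Combining and using convexity $\nabla F(w_t)\cdot(w^* - w_t) \le F(w^*) - F(w_t)$, and bounding both error inner products by H\"older ($|(\nabla F(w_t) - g_t)\cdot u| \le \norm{g_t - \nabla F(w_t)}_\infty \norm{u}_1 \le D\,\norm{g_t - \nabla F(w_t)}_\infty$ for $u \in \{v_t - w_t,\ w^* - w_t\}$), I get, with $h_t \eqdef F(w_t) - F(w^*)$,
\[
h_{t+1} \le (1-\eta_t) h_t + \tfrac{\beta}{2}\eta_t^2 D^2 + 2D\,\norm{g_t - \nabla F(w_t)}_\infty.
\]

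\textbf{Step 2: taking expectations and unrolling the recursion.} Taking expectations and plugging in the hypothesis $\E\norm{g_t - \nabla F(w_t)}_\infty \le c\eta_t$ and $\eta_t = 1/t$ gives $\E h_{t+1} \le (1 - \tfrac1t)\E h_t + \tfrac{\beta D^2}{2t^2} + \tfrac{2cD}{t}$. The standard trick is to multiply through by $t$: setting $a_t = t\,\E h_t$ (note $a_1 = 0$ essentially, or bounded), the recursion becomes $a_{t+1} \le a_t + \tfrac{\beta D^2}{2t} + 2cD$ — wait, I need to be careful: $(t)(1-\tfrac1t)\E h_t = (t-1)\E h_t$, so $t\cdot\E h_{t+1} \le (t-1)\E h_t + \tfrac{\beta D^2}{2t} + 2cD$, i.e.\ $a_{t+1}\cdot\tfrac{t}{t+1}$... the cleanest route is the well-known induction claim $\E h_t \le (\beta D^2 + 2cD)/t$ for $t\ge 2$, proved by induction: the inductive step follows since $(1-\tfrac1t)\tfrac{\beta D^2 + 2cD}{t} + \tfrac{\beta D^2}{2t^2} + \tfrac{2cD}{t} \le \tfrac{\beta D^2 + 2cD}{t+1}$, which one checks by elementary manipulation (the base case $t=2$ uses $h_2 \le \tfrac12\beta D^2 + 2cD$ from one step starting at $t=1$, together with $h_1 \le$ nothing needed since $\eta_1 = 1$ kills the $h_1$ term). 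For the high-probability version, the argument is identical except that on the event (of probability $\ge 1 - t\delta$ by a union bound over steps $1,\dots,t$) where every $\norm{g_s - \nabla F(w_s)}_\infty \le c_\delta\eta_s$ holds, the same deterministic recursion and induction give $h_t \le (\beta D^2 + 2c_\delta D)/t$.

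\textbf{Main obstacle.} There is no deep obstacle here — this is a routine adaptation of the classical Frank-Wolfe analysis. The only points requiring a little care are: (i) making sure the $L_1$--$L_\infty$ duality is applied consistently so that the $\sqrt{\log d}$ factor never enters \emph{this} lemma (it enters only later, in \cref{thm:fw-generic}, via the SPIDER variance bound that supplies $c$); (ii) getting the base case of the induction right, which is why the statement is restricted to $t \ge 2$ and why $\eta_1 = 1$ is convenient (it makes $h_2$ independent of $h_1$); and (iii) in the high-probability statement, correctly accounting for the union bound giving $1 - t\delta$ rather than $1-\delta$, and noting that the bad-event complement is exactly the event on which the deterministic recursion holds for all relevant steps.
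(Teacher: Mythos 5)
Your route is the same as the paper's (smoothness in the $L_1$ norm, optimality of the $\loo$ step, convexity, H\"older with the $L_1$ diameter, induction with $\eta_t=1/t$, and a union bound for the high-probability version), but there is a concrete error in your one-step inequality that then breaks Step 2. The estimation-error term must carry a factor of $\eta_t$: smoothness gives $F(w_{t+1}) \le F(w_t) + \eta_t\, \nabla F(w_t)\cdot(v_t-w_t) + \tfrac{\beta}{2}\eta_t^2 D^2$, and your H\"older bounds live inside $\nabla F(w_t)\cdot(v_t-w_t)$, which is multiplied by $\eta_t$. The correct recursion is therefore $h_{t+1} \le (1-\eta_t)h_t + 2\eta_t D\,\norm{g_t-\nabla F(w_t)}_\infty + \tfrac{\beta}{2}\eta_t^2 D^2$, so that after using $\E\norm{g_t-\nabla F(w_t)}_\infty \le c\eta_t$ the error contributes at order $\eta_t^2 = 1/t^2$, the same order as the smoothness term. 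In your write-up the $\eta_t$ is dropped, producing a $2cD/t$ term, and the inductive inequality you assert, $(1-\tfrac1t)\tfrac{\beta D^2+2cD}{t} + \tfrac{\beta D^2}{2t^2} + \tfrac{2cD}{t} \le \tfrac{\beta D^2+2cD}{t+1}$, is false (already at $t=2$ the left side exceeds the right). Indeed, with a per-step error of order $1/t$ the errors accumulate to $\Theta(cD)$ after unrolling, so no $O(1/t)$ bound (nor convergence to the optimum) can be extracted from that recursion. Restoring the $\eta_t$ fixes this: the error term becomes part of the $\eta_t^2$ coefficient and the paper's induction (base case from $\eta_1=1$, then the standard step) goes through exactly as you intended.

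Two smaller points of comparison. First, bounding the two error inner products separately costs a factor of $2$: even after the fix you would obtain the constant $\beta D^2 + 4cD$ rather than the stated $\beta D^2 + 2cD$. The paper avoids this by combining the two terms into a single one, namely $(\nabla F(w_t)-g_t)\cdot(v_t-w_t) + (g_t-\nabla F(w_t))\cdot(w^*-w_t) = (\nabla F(w_t)-g_t)\cdot(v_t-w^*)$, which is bounded once by $D\,\norm{\nabla F(w_t)-g_t}_\infty$. Second, your high-probability argument (union bound over the first $t$ steps, then run the deterministic recursion on the good event, giving $1-t\delta$) is exactly the paper's, and your observation that the $\sqrt{\log d}$ factor belongs to the SPIDER variance bound rather than to this lemma is correct.
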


\begin{proof}
First, using $\beta$-smoothness (with respect to $\norm{\cdot}_1$) we have that
\begin{align*}
F(w_{t+1}) 
&\leq 
F(w_t) + \nabla F(w_t) \dotp (w_{t+1}-w_t) + \tfrac12 \beta \norm{w_{t+1}-w_t}_1^2
\\
&\leq
F(w_t) + \eta_t \nabla F(w_t) \dotp (v_t-w_t) + \tfrac12 \eta_t^2 \beta D^2 .
\end{align*}
Next, due to the minimality of $v_t$, notice that for any $w^* \in W$,
\begin{align*}
\nabla F(w_t) \dotp (v_t-w_t)
&=
g_t \dotp (v_t-w_t) + (\nabla F(w_t)-g_t) \dotp (v_t-w_t)
\\
&\leq
g_t \dotp (w^*-w_t) + (\nabla F(w_t)-g_t) \dotp (v_t-w_t)
\\
&=
\nabla F(w_t) \dotp (w^*-w_t) + (\nabla F(w_t)-g_t) \dotp (v_t-w^*)
\\
&\leq
F(w^*) - F(w_t) + D \norm{\nabla F(w_t)-g_t}_\infty
,
\end{align*}
where the final inequality follows from convexity and the diameter bound.  Plugging into the inequality and rearranging, we obtain
\begin{align*}
F(w_{t+1}) - F(w^*)
\leq 
(1-\eta_t) \br{ F(w_t) - F(w^*) } + \eta_t D \norm{\nabla F(w_t)-g_t}_\infty + \tfrac12 \eta_t^2 \beta D^2
.
\end{align*}
Taking the expectation of both sides and using our assumption on the gradient estimates, we obtain the following progress inequality, that holds for all $t \geq 1$:
\begin{align*}
\E\sbr{ F(w_{t+1}) - F(w^*) }
\leq 
(1-\eta_t) \E\sbr{ F(w_t) - F(w^*) } + \tfrac12 \eta_t^2 (\beta D^2 + 2c D)
.
\end{align*}

We are now ready to prove the main claim by induction on $t \geq 2$.  First, plugging $t=1$ and $\eta_1=1$ into the progress inequality we have that $\E\sbr{ F(w_{2}) - F(w^*) } \leq \tfrac12 (\beta D^2 + 2c D)$, that is, the claim holds for $t=2$.  For the induction step, let us assume that $\E\sbr{ F(w_{t}) - F(w^*) } \leq (\beta D^2 + 2c D)/t$ for some $t \geq 2$.  Then by the progress inequality:
\begin{align*}
\E\sbr{ F(w_{t+1}) - F(w^*) }
&\leq 
(1-\eta_t) \E\sbr{ F(w_t) - F(w^*) } + \tfrac12 \eta_t^2 (\beta D^2 + 2c D)
\\
&=
\br*{1-\frac{1}{t}} \frac{\beta D^2 + 2c D}{t} + \frac{\beta D^2 + 2c D}{2t^2}
\\
&\leq
\frac{\beta D^2 + 2c D}{t}
.
\end{align*}
This concludes the proof in expectation.  The second claim regarding convergence with high probability follows from the same arguments together with a union bound.
\end{proof}

The next lemma analyzes the variance-reduced (SPIDER) gradient estimates used in \cref{alg:fw}.

\begin{lemma} \label{lem:spider}
If we set the batch sizes as defined in \cref{eq:batchsizes},
then for all $t$ and $\delta>0$:
\begin{align} \label{eq:spider-conce}
    \E\norm{g_t-\nabla F(w_t)}_\infty 
    \leq 
    \frac{12\beta D }{t} \sqrt{\log{d}}~,
    \quad\text{and}\quad
    \Pr\br*{ \norm{g_t-\nabla F(w_t)}_\infty 
    >
    \frac{12\beta D}{t} \sqrt{\log\frac{d}{\delta}} } \leq \delta
    ~.
\end{align}
\end{lemma}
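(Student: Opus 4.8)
\textbf{Proof proposal for \cref{lem:spider}.}

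The plan is to analyze the SPIDER gradient estimates by splitting the error $\|g_t - \nabla F(w_t)\|_\infty$ into contributions from each epoch since the most recent ``reset'' round $\tau_k$. Recall that at a reset round $t = \tau_k = 2^k-1$, we fully recompute $g_t$ as a mini-batch average of $b_t = (G/\beta D)^2 t^2$ raw gradients at $w_t$; at subsequent rounds $t = \tau_k+1, \ldots, \tau_{k+1}-1$ we update $g_t = g_{t-1} + \frac{1}{b_t}\sum_i (\nabla f(w_t,z_i) - \nabla f(w_{t-1},z_i))$ with $b_t = t$. Telescoping, for any $t$ in the $k$-th epoch we can write $g_t - \nabla F(w_t)$ as a sum of martingale-difference-type terms: one term $A_{\tau_k} := g_{\tau_k} - \nabla F(w_{\tau_k})$ from the reset, plus increment-error terms $B_s := \big(\frac{1}{b_s}\sum_i (\nabla f(w_s,z_i)-\nabla f(w_{s-1},z_i))\big) - (\nabla F(w_s)-\nabla F(w_{s-1}))$ for $s = \tau_k+1,\ldots,t$. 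Each of these is a centered average of $b_s$ i.i.d.\ bounded random vectors in $\R^d$.

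The key technical step is a vector-valued concentration bound in $L_\infty$ norm. For the reset term, each summand $\nabla f(w_{\tau_k},z_i)$ has $L_\infty$ norm $\le G$, so a coordinatewise Hoeffding bound plus a union over $d$ coordinates gives $\E\|A_{\tau_k}\|_\infty \lesssim G\sqrt{\log d / b_{\tau_k}}$ and a corresponding high-probability tail with $\log(d/\delta)$. By the choice $b_{\tau_k} = (G/\beta D)^2 \tau_k^2$, this becomes $\lesssim \beta D \sqrt{\log d}/\tau_k$, and since $t \le 2\tau_k$ we get the desired $O(\beta D\sqrt{\log d}/t)$ scaling. For the increment terms $B_s$, the crucial observation is that each summand $\nabla f(w_s,z_i) - \nabla f(w_{s-1},z_i)$ is bounded in $L_\infty$ norm by $\beta\|w_s - w_{s-1}\|_1 = \beta\eta_{s-1}\|v_{s-1}-w_{s-1}\|_1 \le \beta D/(s-1)$, using $L_1$-smoothness of $f$ and the FW update with $\eta_{s-1} = 1/(s-1)$ (this is precisely where the $L_1$ analysis enters, replacing the $L_2$ argument of \citet{yurtsever19b}). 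Again coordinatewise Hoeffding + union bound over $d$ coordinates gives $\E\|B_s\|_\infty \lesssim (\beta D/s)\sqrt{\log d/b_s} = (\beta D/s)\sqrt{\log d/s}$ using $b_s = s$.

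Finally I would sum the contributions. Conditioned on the filtration up to round $s-1$, the variables $A_{\tau_k}$ and $\{B_s\}$ are (conditionally) independent centered terms, so one can either (a) bound $\E\|g_t - \nabla F(w_t)\|_\infty \le \E\|A_{\tau_k}\|_\infty + \sum_{s=\tau_k+1}^t \E\|B_s\|_\infty$ by the triangle inequality, or (b) for the high-probability bound, apply a union bound over the $O(\log t)$ per-epoch-length terms (or more carefully, a single coordinatewise Azuma–Hoeffding on the full telescoped martingale, whose squared-increment sum is $\sum_s (\beta D/s)^2/b_s \lesssim (\beta D)^2/t^2$ after also accounting for $A_{\tau_k}$). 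The sum $\sum_{s=\tau_k+1}^{t} (\beta D/s^{3/2})\sqrt{\log d}$ is dominated by its first term, of order $(\beta D/\tau_k^{3/2})\sqrt{\log d} \le (\beta D/t)\sqrt{\log d}$, so the total is $O(\beta D\sqrt{\log d}/t)$; tracking constants carefully yields the stated factor $12$. The main obstacle I anticipate is the bookkeeping: getting a clean $L_\infty$ martingale concentration statement (ideally a single Azuma-type inequality applied coordinatewise to the telescoped sum, with the variance proxy being $\sum_s \|w_s - w_{s-1}\|_1^2 \beta^2 / b_s$) so that one does not pay extra $\log t$ factors in the union bound, and verifying that the conditional independence structure is exactly what is needed for Hoeffding to apply at each reset and increment step.
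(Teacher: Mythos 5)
Your decomposition (reset term $A_{\tau_k}$ plus telescoped increment errors $B_s$) and the key $L_1$--$L_\infty$ smoothness bound $\norm{\nabla f(w_s,z_i)-\nabla f(w_{s-1},z_i)}_\infty \le \beta\norm{w_s-w_{s-1}}_1 \le \beta D\eta_{s-1}$ are exactly the ingredients of the paper's proof. However, the route you actually carry out --- option (a), the triangle inequality $\E\norm{g_t-\nabla F(w_t)}_\infty \le \E\norm{A_{\tau_k}}_\infty + \sum_{s=\tau_k+1}^t \E\norm{B_s}_\infty$ --- does not give the claimed $1/t$ rate, and the step where you assert that $\sum_{s=\tau_k+1}^{t}(\beta D/s^{3/2})\sqrt{\log d}$ is ``dominated by its first term'' is false: the current epoch contains $\Theta(t)$ rounds (from $\tau_k \approx t/2$ up to $t$), and $\sum_{s=\tau_k+1}^{t}s^{-3/2}=\Theta(\tau_k^{-1/2})=\Theta(t^{-1/2})$, not $\Theta(\tau_k^{-3/2})$. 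So route (a) only yields $O(\beta D\sqrt{\log d}/\sqrt{t})$, a $\sqrt{t}$ factor too weak both for \cref{lem:spider} itself and for its downstream use in \cref{lem:genericFW}, which requires the gradient error to be $O(\eta_t)=O(1/t)$. The ``union bound over terms'' variant of your option (b) has the same defect, since it again adds the per-term deviations linearly; also note that within an epoch there are $\Theta(t)$ increment terms, not $O(\log t)$ ($O(\log t)$ counts epochs, which are irrelevant because the telescoping resets at $\tau_k$).

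The correct argument is the one you mention only parenthetically: apply a single coordinatewise martingale (Azuma/sub-Gaussian) concentration to the telescoped sum, so that the fluctuations add in quadrature rather than linearly. This is precisely what the paper does: each coordinate of the reset error is an average of $b_{\tau_k}$ terms bounded by $G$, hence sub-Gaussian with parameter $G/\sqrt{b_{\tau_k}}=\beta D/\tau_k$ by the choice \cref{eq:batchsizes}, and each coordinate of $B_s$ is, conditioned on the past, sub-Gaussian with parameter $2\beta D\eta_{s-1}/\sqrt{b_s}$ (your ``conditional independence'' is really a martingale-difference property); by martingale summation of sub-Gaussian parameters, the total squared parameter is $G^2/b_{\tau_k}+\sum_{s=\tau_k+1}^t 4\beta^2D^2\eta_{s-1}^2/b_s = O(\beta^2D^2/t^2)$, and taking a maximum over $2d$ sub-Gaussian coordinates produces both the expectation bound with $\sqrt{\log d}$ and the tail bound with $\sqrt{\log(d/\delta)}$. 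Your variance-proxy computation $\sum_s(\beta D/s)^2/b_s \lesssim \beta^2D^2/t^2$ is correct, so if you commit to this route throughout (for the expectation bound as well, not only the high-probability one), the proof goes through and coincides with the paper's; as written, the argument you primarily rely on fails.
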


\begin{proof}
The proof relies on standard properties of sub-Gaussian random variables, all of which can be found in, e.g., \citep{wainwright2019high}.
Let $Z_{t,j} = (g_t-\nabla F(w_t))_j$ be the $j$'th coordinate of the error at step $t$.  Note that $Z_{t,j}$ is zero-mean.  We will show that $Z_{t,j}$ is also sub-Gaussian with parameter $\sigma_t = 6\beta D/t$.  The claim will then follow since $\norm{g_t-\nabla F(w_t)}_\infty$ is a maximum of $2d$ zero-mean sub-Gaussians with parameter $\sigma_t$, which implies \cref{eq:spider-conce}.

First, consider $t$ of the form $t=\tau_k$ and any coordinate $j \in [d]$.  The claim follows in this case from variance reduction by mini-batching: condition on randomness before step $t$; since all gradients are bounded in $L_\infty$ norm by $G$, we have by Hoeffding's Lemma that $(\nabla f(w_t,z_i) - \nabla F(w_t))_j$ are independent zero-mean $G$-sub-Gaussians RVs (for $i=1,\ldots,b_t$), thus their mean $Z_{t,j}=(g_t - \nabla F(w_t))_j$ is zero-mean sub-Gaussian with parameter $\sigma_t = G/\sqrt{b_t} = \beta D/t$. 

Next, consider any other round such that $\tau_k < t < \tau_{k+1}$ and any coordinate $j \in [d]$.  In this case, the claim will follow from accumulation of variance starting from the reset step $\tau_k$.  Note that due to smoothness,
\begin{align*}
&\mqquad\abs{\br{ (\nabla f(w_t,z_i) - \nabla F(w_t))_j - (\nabla f(w_{t-1},z_i) - \nabla F(w_{t-1})}_j }
\\
&\leq
\abs{\br{ \nabla f(w_t,z_i) - \nabla f(w_{t-1},z_i) }_j} + \abs{\br{ \nabla F(w_t) - \nabla F(w_{t-1}) }_j}
\\
&\leq
\norm{ \nabla f(w_t,z_i) - \nabla f(w_{t-1},z_i) }_\infty + \norm{ \nabla F(w_t) - \nabla F(w_{t-1}) }_\infty 
\\
&\leq 
2\beta \norm{w_t-w_{t-1}}_1 
\\
&\leq 
2\beta D \eta_{t-1}
.
\end{align*}
Thus, as before, the random variable $Z_{t,j}-Z_{t-1,j} = (g_t-\nabla F(w_t))_j - (g_{t-1}-\nabla F(w_{t-1}))_j$ is a zero-mean sub-Gaussian with parameter $2\beta D \eta_{t-1}/\sqrt{b_t}$, conditioned on randomness before step $t$.  By the martingale-summation property of sub-Gaussian RVs, we then obtain that $Z_{t,j}$ is zero-mean sub-Gaussian with parameter $\sigma_t$, where
\begin{align*}
\sigma_t^2 
&= 
\frac{G^2}{b_{\tau_k}} + \sum_{s=\tau_k+1}^t \frac{4\beta^2 D^2 \eta_{s-1}^2}{b_s}
\\
&=
\frac{\beta^2 D^2}{\tau_k^2} + 4\beta^2 D^2 \sum_{s=\tau_k+1}^t \frac{1}{s(s-1)^2}
\\
&\leq
\frac{\beta^2 D^2}{\tau_k^2} + 8\beta^2 D^2 \sum_{s=\tau_k+1}^t \br*{ \frac{1}{(s-1)^2} - \frac{1}{s^2} }
\\
&\leq
\frac{\beta^2 D^2}{\tau_k^2} + \frac{8\beta^2 D^2}{\tau_k^2}
\\
&\leq
\frac{36\beta^2 D^2}{t^2}
\end{align*}
where the final inequality uses $\tau_k \geq t/2$.  Therefore, we deduce that $\sigma_t \leq 6\beta D/t$.
\end{proof}

We are now ready to prove \cref{thm:fw-generic}.

\begin{proof}[Proof of \cref{thm:fw-generic}]
The theorem follows directly from \cref{lem:genericFW,lem:spider}  and plugging in the specified parameters.  The number of steps for convergence to within a given $\epsilon>0$ is therefore $t = O((\beta D^2/\epsilon) \sqrt{\log(d/\delta)})$, which is also the number of $\loo$ calls.  For the total sample complexity, observe that the contribution of rounds $t \notin \set{\tau_k}_{k \geq 1}$ is bounded by $\sum_{s=1}^t s = O(t^2)$, while the contribution of rounds $t \in \set{\tau_k}_{k \geq 1}$ is at most $O((G/\beta D)^2 t^2)$.
\end{proof}

\end{document}